%
%
\documentclass{informs3}

\OneAndAHalfSpacedXI 



\usepackage{natbib}
\bibpunct[, ]{(}{)}{,}{a}{}{,}%

\usepackage{custom_formats_INFORMS}
\usepackage{subcaption}
\usepackage{multirow}
\RequirePackage{booktabs}

\newif\ifcomments

\commentstrue

\ifcomments
  \newcommand{\colornote}[3]{{\color{#1}\bf{#2: #3}\normalfont}}
\else
  \newcommand{\colornote}[3]{}
\fi


\newcommand {\QualityMatch}{Quality Match}

\newcommand {\pleooone}{($p<0.001$)}

\usepackage{tikz}
\usetikzlibrary{shapes,arrows,shadows,positioning,decorations.pathreplacing,trees,calc,patterns}

\OneAndAHalfSpacedXII 

\usepackage{algorithm}
\usepackage{algpseudocode}
\usepackage{tikz}

\EquationsNumberedThrough    

\TheoremsNumberedThrough     
\ECRepeatTheorems  %

\MANUSCRIPTNO{}

\begin{document}


\RUNAUTHOR{Liao et al.}

\RUNTITLE{Minority Reports: Balancing Cost and Quality in Ground Truth Data Annotation}

\TITLE{Minority Reports: Balancing Cost and Quality in Ground Truth Data Annotation}

\ARTICLEAUTHORS{%
\AUTHOR{Hsuan Wei Liao\textsuperscript{1}, Christopher Klugmann\textsuperscript{2}, Daniel Kondermann\textsuperscript{2}, Rafid Mahmood\textsuperscript{1,3}}
\AFF{
\textsuperscript{1}Telfer School of Management, University of Ottawa, \EMAIL{\{hliao009, rmahmood\}@uottawa.ca} \\
\textsuperscript{2}Quality Match, \EMAIL{\{ck, dk\}@quality-match.com} \\
\textsuperscript{3}NVIDIA
}
}

\ABSTRACT{%
High-quality data annotation is an essential but laborious and costly aspect of developing machine learning-based software. We explore the inherent tradeoff between annotation accuracy and cost by detecting and removing minority reports---instances where annotators provide incorrect responses---that indicate unnecessary redundancy in task assignments.   
We propose an approach to prune potentially redundant annotation task assignments before they are executed by estimating the likelihood of an annotator disagreeing with the majority vote for a given task. 
Our approach is informed by an empirical analysis over computer vision datasets annotated by a professional data annotation platform, which reveals that the likelihood of a minority report event is dependent primarily on image ambiguity, worker variability, and worker fatigue.  Simulations over these datasets show that we can reduce the number of annotations required by over 60\% with a small compromise in label quality, saving approximately 6.6 days-equivalent of labor.
Our approach provides annotation service platforms with a method to balance cost and dataset quality. Machine learning practitioners can tailor annotation accuracy levels according to specific application needs, thereby optimizing budget allocation while maintaining the data quality necessary for critical settings like autonomous driving technology.
}%





\maketitle


\section{Introduction}

Machine learning (ML) is the fundamental component of most image recognition-based software from video conferencing to robotics and autonomous vehicles.
The most critical challenge in developing ML models for these systems is the availability of high-quality \emph{labeled} data to train the models \citep{dimensional2019}. 
For example, consider the perception technology in autonomous vehicles used to detect obstacles and plan routes. Autonomous vehicle developers collect billions of miles of driving video data per year to improve the underlying  models \citep{bigelow2024tesla}, but this data must first be annotated with the location and type of obstacles in every video frame before being used for model development \citep{braun2019eurocity}. 
Moreover, the annotation process has historically been performed by manual human labor.
Given the immense volume of data, annotation grows tremendously expensive.

Data annotation can be performed by teams within the firms developing the ML models or outsourced to annotation platforms \citep{gonzalez2024fairness}.
There exist different paradigms on managing annotation depending on the ML problem at hand \citep{hanbury2008survey, song2022learning, tan2024large}. 
In this work, we analyze computer vision problems that require identifying attributes of objects in images, but remark that our results naturally generalize to other ML tasks. 
For our setting, annotation involves designing a series of multiple choice or true/false questions or tasks that can be posed for each image or cropped segment of an image in the dataset (e.g., `is the object in the image a pedestrian?').
These tasks are assigned---often via randomized matching \citep{karger2014budget}---to a team of human annotators to complete.

Annotation is manual online labor performed over long hours by workers sometimes as their primary income source \citep{horton2010labor, berg2018digital}. 
Individual workers may make mistakes in labeling, due to factors such as the complexity or ambiguity of the task (e.g., analyzing objects that are partially occluded in images), variable worker skill levels, and fatigue after working long hours.
Given the likelihood for individual errors, each task is repeated by a set of annotators and their responses are aggregated to create accurate `ground truth' labels \citep{karger2014budget, khetan2016achieving, Liao_2021_CVPR}. 
This repetition multiplies the overall cost of labeling a dataset. 
Consequently, the firms developing the ML models must balance a tradeoff between label quality and cost, depending on the downstream application and the acceptable error tolerance of the overall software.
Even well-known benchmark datasets have erroneous labels for between 1 to 10\% of their images; the most famous image classification dataset for example, ImageNet \citep{deng2009imagenet}, is estimated to have errors on 5.8\% of their image labels \citep{northcutt2021pervasive}.

In this paper, we explore the tradeoff between annotation costs, measured by the total number of annotation tasks completed, and quality, measured by the proportion of labels that change with more repeated annotations, referred to as repeats. 
Our approach is premised on the intuitive notion that conditioned ex post on a task with multiple repeats, the workers who had submitted different responses from the aggregated answer are giving unnecessary redundancy for the task.
We refer to such responses as \emph{minority reports} that contradict the ground truth annotation.
We observe that, given an oracle capable of identifying when an annotator will produce a minority report for an assigned task, we can instead re-assign the worker to other tasks.
In place of this acausal oracle, we predict the likelihood of a minority report for each potential task assignment. We then prune assignments with high predicted probabilities, which incurs some risk of these lower-cost aggregated labels changing from their counterfactual ground truth values.

We develop our framework in collaboration with \QualityMatch, a high-quality data annotation provider. 
Given a dataset to label, \QualityMatch's in-house service determines an ontology for the dataset, estimates the number of repeated annotations needed for each annotation task, and assigns these tasks to a pool of annotators. 
Using \QualityMatch's service, we annotate two datasets for benchmark computer vision problems with 196,446 and 16,000 annotation tasks, respectively \citep{braun2019eurocity, alibeigi2023zenseact}. 
The first dataset is annotated according to \QualityMatch's standard operational pipeline where the number of repeats is dynamically determined between five to 12, while the second dataset is annotated by a constant number of 11 workers for each task.

For each computer vision problem, we develop a predictive model to estimate the likelihood of a worker to generate a minority report. Our empirical analysis over both datasets reveals that this likelihood is governed almost entirely via three key factors:
\begin{itemize}
    \item \textbf{Image ambiguity:} Objects that are partially occluded, in the distance, or blurry, are more challenging to annotate. 
    There remains a small subset of images for which the odds of a minority report being generated increase by at least 20$\times$.
    
    \item \textbf{Annotator variability:} Worker quality is  symmetric with high variance. At worst, certain workers can increase the odds of producing minority reports by up to 54$\times$.

    \item \textbf{Fatigue:} Workers behave according to a bathtub curve when subject to long continuous working periods. Workers are more likely to produce minority reports at the start (i.e., warm-up) and at the end (i.e., exhaustion) of a shift.
\end{itemize}

Motivated by these empirical insights, we propose a framework to predict, given an annotation task and worker, the likelihood of yielding a minority report, and prescribe whether to enable this task to be completed or to remove the assignment, thereby reducing the total number of annotations requested. Figure \ref{fig:teaser} visualizes our framework, which employs a simple model that can be inserted into any existing annotation platform.
We theoretically show that pruning task assignments follows a `no free lunch' principle, where for any predictive model, pruning always degrades the quality of the dataset by potentially changing the final aggregated labels from their counterfactual ground truths. 
Nonetheless, our approach can significantly reduce annotation costs at only a marginal drop in dataset quality. 
When applied to our datasets, a conservative pruning strategy can prune approximately 22\% of the assigned tasks while preserving over 99\% of the majority vote labels, and that an aggressive strategy can prune 60\% of the assignments while preserving over 96\% of the majority votes.

Our contributions are:
\begin{enumerate}
        \item We analyze factors affecting erroneous annotations in the labeling of ML data to show that image ambiguity, worker skill variability, and worker fatigue capture the likelihood of annotators erring from the majority vote almost completely. 
        
        \item We develop a prescriptive framework to forecast in real-time whether an annotator will disagree with the future crowd decision and determine whether to remove task assignments. Simulations show that our strategy can yield up to a 60\% decrease in the number of annotations at less than 4\% loss in dataset quality.

        \item We theoretically analyze this framework to show that while pruning will always lead to a drop in quality, an accurate predictive model can significantly reduce the total number of annotations. This further yields rule-of-thumb guidelines for the average number of repeats needed to annotate a given dataset.

\end{enumerate}

\begin{figure}[t!]
    \centering
    \includegraphics[width=0.9\linewidth]{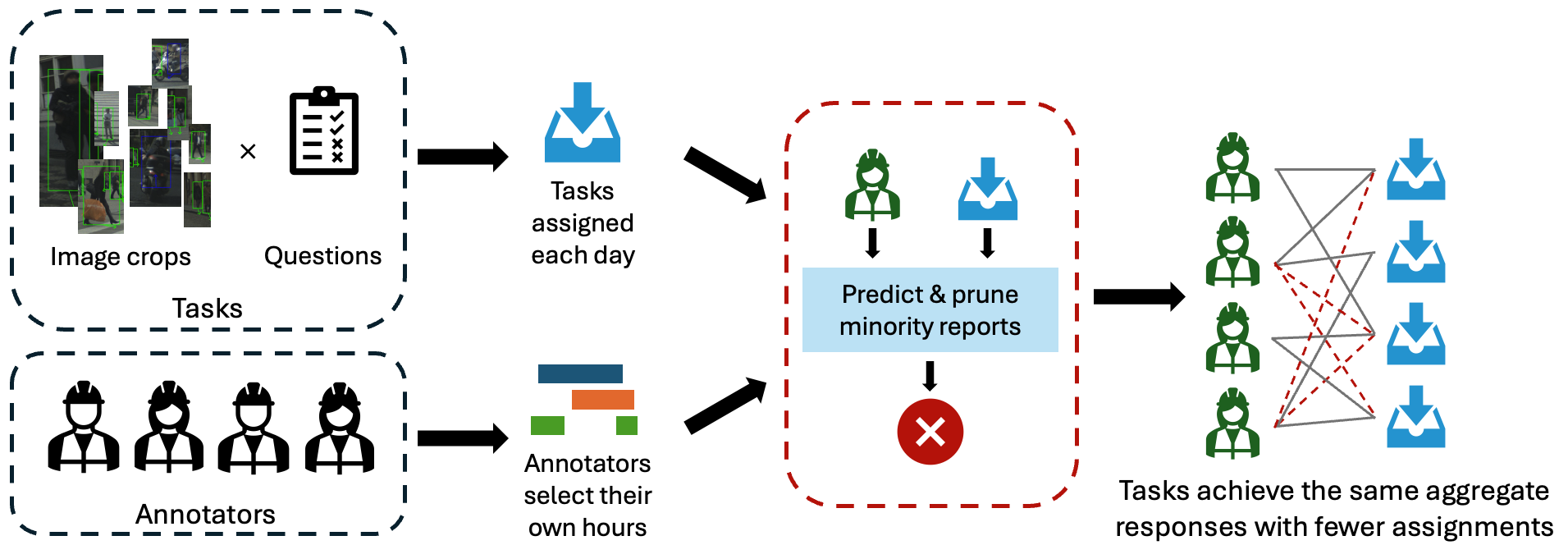}
    \caption{
        Overview of our method. Our prescriptive pruning framework can slot into any annotation pipeline after tasks are assigned but before they are executed to reduce costs.
    }
    \label{fig:teaser}
\end{figure}

\section{Related literature}

Training ML models requires large data sets that are accurately annotated. 
Nonetheless, most datasets have some varying noise in their labels \citep{song2022learning,liao2024transferring, durasov2024uncertainty}. 
For instance, classic benchmark datasets such as ImageNet \citep{deng2009imagenet}, were constructed using crowd-sourced annotation efforts, and are known to have errors for approximately 5.8\% of the image data points \citep{northcutt2021pervasive}.

ML dataset annotation consists of large operational pipelines that have become key components of enterprise ML engineering teams \citep{dimensional2019}.
First, individual tasks for annotating data points are assigned to workers in an online labor pool. While workforce allocation is a well-studied operational problem (see \citet{bouajaja2017survey} for a review) with applications in assembly lines \citep{vila2014branch}, 
crowdsourcing \citep{Wang2017, Tian2022, Manshadi2022, Fatehi2022}, 
finance \citep{stratman2024decision}, and
healthcare \citep{lanzarone2014robust, adelman2024dynamic}, 
task assignment for data annotation is most commonly achieved via randomized matching \citep{karger2014budget}. This requires minimal implementation effort, making it attractive in practice. 
Once tasks are completed, the labels must be aggregated. The naive approach to creating accurate `ground truth' datasets is by computing a majority vote from a group of workers. More sophisticated techniques determine weighted averages by estimating annotator ability and task difficulty using expectation-maximization \citep{dawid1979maximum, raykar2010learning, yin2021learning, palley2023boosting}, optimization \citep{karger2014budget, khetan2016achieving}, dynamic policies \citep{Wang2017}, and end-to-end combined annotation and learning \citep{Rodrigues2018, Tanno2019, Goh2023}.
In contrast to this literature, our prescriptive pruning framework lies in the interim between tasks being assigned but not completed. Given an arbitrary assignment of tasks, we predict which task assignments are likely to be redundant and prune them.

Annotation tasks are relatively low-paying and can be completed quickly, which can lead crowd-sourced workers to be unengaged, make errors, or potentially `cheat' the system, for example by giving random answers \citep{staats2012specialization, karger2014budget}. 
A key challenge remains to identify and prevent such adversarial workers from completing tasks with erronous labels that poison the aggregated ground truth.
Most closely related to our work is by \citet{jagabathula2017identifying}, who propose several algorithms for filtering adversarial workers by penalizing the number of times they have previously disagreed with the majority vote.  
This filtering can be performed after tasks are completed with the goal of eventually removing adversarial workers from the labor pool. 
In contrast, we estimate the likelihood of a worker disagreeing with the majority in a future task as a function of the worker's latent `skill' level. Rather than removing the worker, our treatment removes task assignments that are predicted to be unfavorable. In the case where workers are of extremely low skill, this may automatically prune all assignments and indirectly remove the worker. 

Online labor platforms have enabled on-demand work such as data annotation at massive scales beyond traditional employment structures \citep{Kittur2013, horton2010labor, Horton2011}. This `gig' workforce faces several ongoing frictions including low pay, unstable demand, and constrained task deadlines, in particular for data annotation tasks \citep{berg2018digital, Yin2018, gonzalez2024fairness}. 
Managing this workforce raises new questions about engagement and retention to ensure tasks are completed to a high degree of quality. 
For example, game-theoretic models reveal how optimal fees can balance platform profit with worker participation \citep{Wen2016}. 
Moreover, contract design for online crowdwork is a principal agent problem that can be designed with incentives to optimize the quality of completed annotation tasks \citep{kaynar2023estimating}. 
Most recently, \citet{liu2024reward} explore contract mechanisms for data annotation for large-language models (LLM) in ML. 
Our work contributes towards quality control in data annotation gig work by simultaneously exploring task difficulty, worker skills, and worker fatigue in an easy-to-learn framework that can derive just-in-time insights before tasks are completed.

Finally, our work is related to the broader empirical literature on worker behavior and productivity. Factors such as task switching \citep{jin2020cost}, work interruption \citep{gurvich2020collaboration, cai2018recover}, task specialization \citep{narayanan2009matter, staats2012specialization}, and prior experience \citep{kc2012accumulating} are known to strongly affect worker productivity in various domains including healthcare, manufacturing, finance, and software development. 
The labor in our setting consists primarily of low-skill, repetitive tasks that are individually completed within seconds. 
Our empirical analysis demonstrates that in this online labor setting, variability in the quality of completed tasks depends primarily on individual worker ability, task difficulty, and worker fatigue, in order of importance.

\section{Setting and data}
\label{sec:setting}

In this section, we first overview the data annotation workflow for image recognition. We then introduce our two datasets that have been annotated via \QualityMatch.

\subsection{The data annotation pipeline}

A standard data annotation pipeline begins by defining the ML task and the characteristics of the data necessary to train an ML model.
This ontology establishes clear guidelines for labeling and ensuring consistency across annotators. 
For example to train an ML model that can detect different types of obstacles observed from an onboard car camera, we may define the characteristics of pedestrians (i.e., humans with both feet on the ground) to distinguish them from motorcyclists (i.e., humans with both feet on a two-wheeled vehicle). 
A series of multiple choice questions (e.g., ``Is the object in the image a pedestrian? Answer yes/no/can't solve'') are constructed based on these guidelines. 
Each question is posed to the images or even cropped segments of images. 
We refer to an annotation \emph{task} as the tuple of an image crop and a single corresponding question.

Tasks are organized and distributed to a pool of annotators through an online task assignment system. 
At \QualityMatch, this assignment system consists of a dynamic queue updated daily with the current tasks that must be completed. 
Annotators have the freedom to choose their own hours of employment, meaning that at any time, there may be a random pool of annotators who are servicing the queue. 
The assignment of a task to an annotator is referred to as a task instance or a \emph{repeat}, and is determined randomly. 
Each annotator must undergo training and calibration with trial tasks to familiarize themselves with the guidelines. 
Moreover, annotators execute these tasks via an online interface that provides instructions and supporting labeling tools such as zooming in on an image.

Each annotation task is repeated multiple times to obtain a set of responses. \QualityMatch~employs a majority vote to aggregate these responses and obtain the consensus ground truth response. 
We interchange the terms `response'---which refers to the multiple choice answer to the task---and `label'---which refers to the object in an image crop as determined by one or more annotation tasks.
There may be repeats where the annotator submitted a response that differed from the majority vote. We refer to such instances as \emph{minority reports}, i.e., disagreements with the ground truth majority.

A core challenge in this pipeline is determining how many repeats should be collected for each annotation task, since each repeat adds to the queue. A naive approach is to set a fixed number of repeats for each task, which effectively multiplies the queue length. 
\QualityMatch~employs an internal dynamic protocol that estimates for each annotation task, the sample size needed for a statistically significant determination of the majority vote \citep{klugmann2024no}. 
Nonetheless, this determination presents significant redundancy. Effectively, the repeats which yielded minority reports can be viewed as unnecessary instances on the annotation queue. 
The goal of our work is to determine just before the time of service in this queue, whether the task instance is unnecessary and can be pruned.

We may also consider variations of this data annotation pipeline or different ML tasks. 
For instance, tasks may be assigned via optimization to workers based on estimated worker performance. Moreover, label aggregation may involve sophisticated weighing mechanisms. Our work is agnostic to different assignment or aggregation mechanisms and lies in between these two stages, thereby allowing easy extension to alternate settings. 

\subsection{Datasets}
\label{sec:setting_datasets}

We use two benchmark image recognition datasets, the EuroCity Persons Dataset (ECPD) \citep{braun2019eurocity} and the Zenseact Open Dataset (ZOD) \citep{alibeigi2023zenseact}, as a study setting for annotator performance. ECPD is a public-access dataset containing recorded videos from the perspective of a front camera on-board vehicles driving in urban traffic environments in 12 European countries. ZOD is a public-access dataset that contains simultaneous recorded videos of multiple cameras arranged above vehicles driving around 14 European countries.

Both datasets are annotated for object detection tasks. First, the video recordings in each dataset are split into images (also referred to as frames). There are 47,000 images in ECPD and over 100,000 images in ZOD. For each image, different objects of interest are annotated via bounding boxes (also referred to as crops) that describe the coordinates of a box around each object in the image. ECPD contains bounding boxes for humans, separated into pedestrians and riders (e.g., on motorbikes, bicycles, wheelchairs). ZOD includes bounding boxes for several classes including automobiles, bicycles, wheelchairs, pedestrians, and animals. Figure \ref{fig:ecpd_example} shows an example of an annotated frame from ECPD. Table \ref{tab:dataset_statistics} summarizes the key statistics for both datasets.

\begin{figure}[t!]
    \centering
    \includegraphics[width=0.75\linewidth]{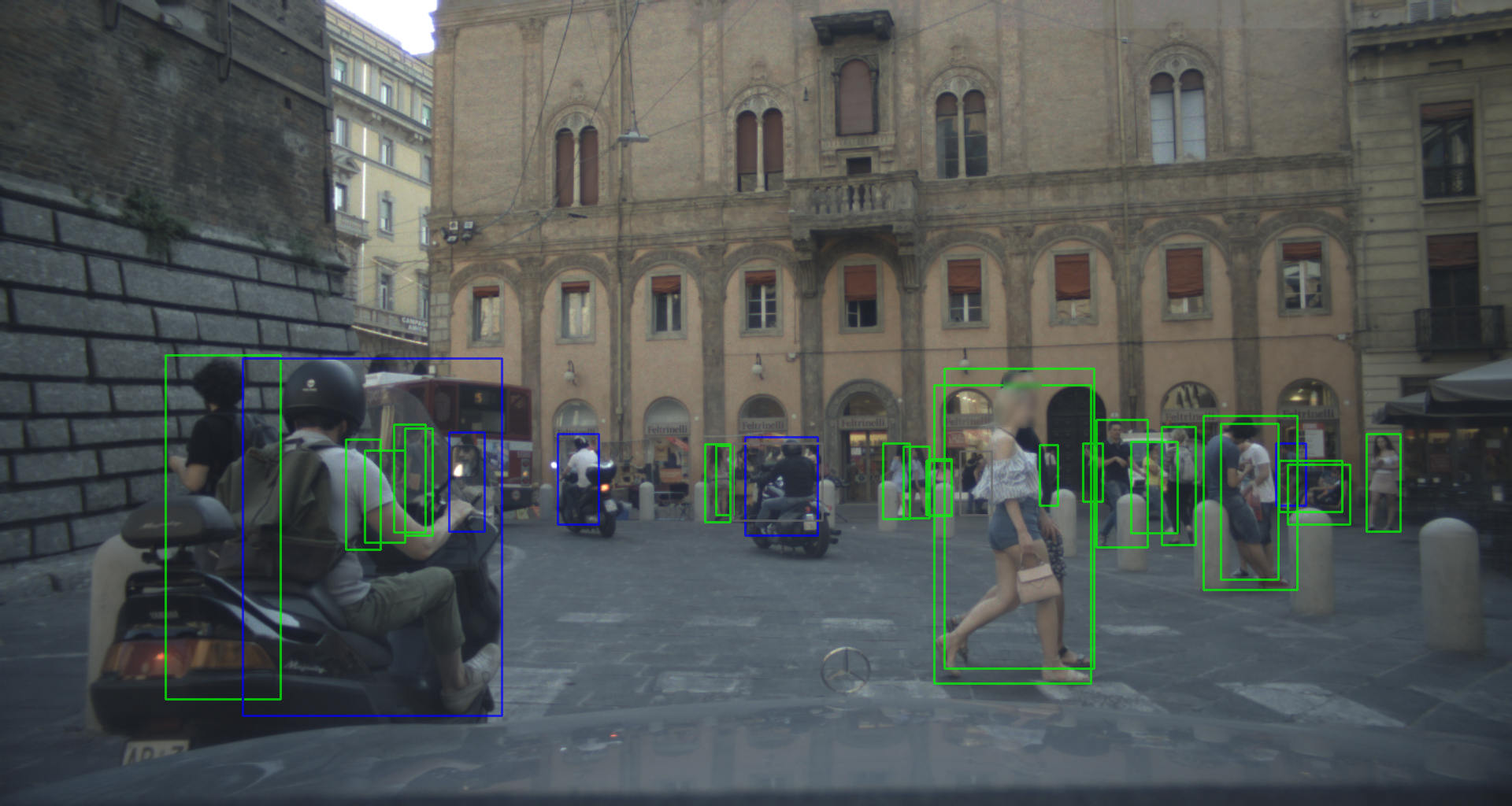}
    \caption{A frame from ECPD with annotated bounding boxes. The green boxes correspond to crops for pedestrians and the blue boxes to crops for motorcycle riders.}
    \label{fig:ecpd_example}
\end{figure}

The original annotator votes for these datasets are not publicly available. Consequently for each dataset, we introduce new tasks and use the annotation service from \QualityMatch~to re-annotate these datasets. We focus on the problem of crops containing vulnerable road users, i.e. pedestrians, cyclists, and motorcyclists, which are an important category of objects that must be detected by autonomous vehicle software. For each crop, we pose different questions which annotators can answer with `yes', `no', or `can't solve'. 
Then, over a study period of multiple days, we observe a randomly chosen cohort of annotators using the \QualityMatch~service, to whom these tasks are assigned. Annotators choose their own hours of employment, are given detailed annotation guidelines to follow, and complete a series of tasks as determined by the daily queue. 
Every task is assigned to multiple annotators by \QualityMatch's internal dynamic protocol.
Finally, at the end of the study period, we aggregate all instances and determine the ground truth responses for each task via majority vote. 
We then label each instance as an agreement with the majority if the response for the instance matches the majority vote, and a minority report if the response is different from the majority vote, or a `can't solve'.

\begin{table}[t!]
\centering
\small
\begin{tabular}{lcc}
\toprule
            & ECPD & ZOD \\ \midrule
Observation period      & Jan. 18-23, 2023 & Jul. 16-17, 2024   \\
Annotators              & 94        & 18   \\
Images                  & 21,776          & 8,488   \\
Crops                   & 32,741    & 16,000   \\
Tasks per crop          & 6         & 1   \\
Repeats per task      & 5-12      & 11   \\
Total repeats & 1,035,840 & 167,876 \\
Frequency of responses (\%) & ~ & \\
$\quad$ \emph{Yes}                  & 18.5 & 94.1 \\
$\quad$ \emph{No}                   & 80.2 & 1.32 \\
$\quad$ \emph{Can't solve}          & 1.32 & 4.59 \\
Average time spent on task (sec) & 0.91 & 1.11 \\
Average disagreement rate (\%) & 4.17 & 5.17
 \\ \bottomrule
\end{tabular}
\caption{Summary statistics of the annotation datasets before pre-processing.}
\label{tab:dataset_statistics}
\end{table}

On ECPD, we first select a random subset of 21,776 images from the dataset, containing a total of 32,741 crops. 
For every crop, we pose the following six questions:
\begin{enumerate}
    \item[Q1)] Is the object in the crop a human being?

    \item[Q2)] Is the object in the crop on a bicycle?
    
    \item[Q3)] Is the object in the crop on a poster?
    
    \item[Q4)] Is the object in the crop on wheels? 

    \item[Q5)] Is the object in the crop a reflection from a window or mirror?

    \item[Q6)] Is the object in the crop a statue or mannequin?
\end{enumerate}
We assign annotation tasks to a randomly chosen cohort of 94 annotators over a study period from January 18 to January 23, 2003, obtaining a total of 1,035,840 task instances. 
Each task is annotated by between five to twelve annotators. 
The variance in the number of repeats per task reflects the inherent `difficulty' of some tasks as estimated by \QualityMatch's internal assignment protocol. 
Figure \ref{fig:tasks_sorted_by_worker_ecpd} plots the number of tasks completed by annotators over the study period. 
We note that six workers had completed less than 100 tasks; we remove the repeats corresponding to these non-representative workers. After removing these repeats, 48 tasks had less than five repeats, making it difficult to determine an accurate ground truth via majority vote; consequently, we removed these tasks entirely from the dataset. After pre-processing, our dataset contains a total of 1,035,491 responses.

On ZOD, we first select a random subset of 8,488 images from the dataset. We then focus on only crops containing pedestrians and bicyclists within these images and ignore all other objects. For every crop, we pose the single question: Is the object in the crop a pedestrian? 
Consequently, an annotation task on ZOD is defined by only the crop. We assign these tasks to a randomly chosen cohort of 18 annotators over a two-day study period starting July 16, 2024. Contrasting the previous study where the number of repeats was determined via \QualityMatch's internal tool, every task on ZOD is annotated by a fixed number of 11 annotators. This design ensures that our analysis is robust to different task assignment protocols. Figure \ref{fig:tasks_sorted_by_worker_zod} plots the number of tasks completed by each annotator over this study period.

\begin{figure}[t!]
    \centering
    \begin{subfigure}[b]{0.49\textwidth}
        \centering
        \includegraphics[width=\textwidth]{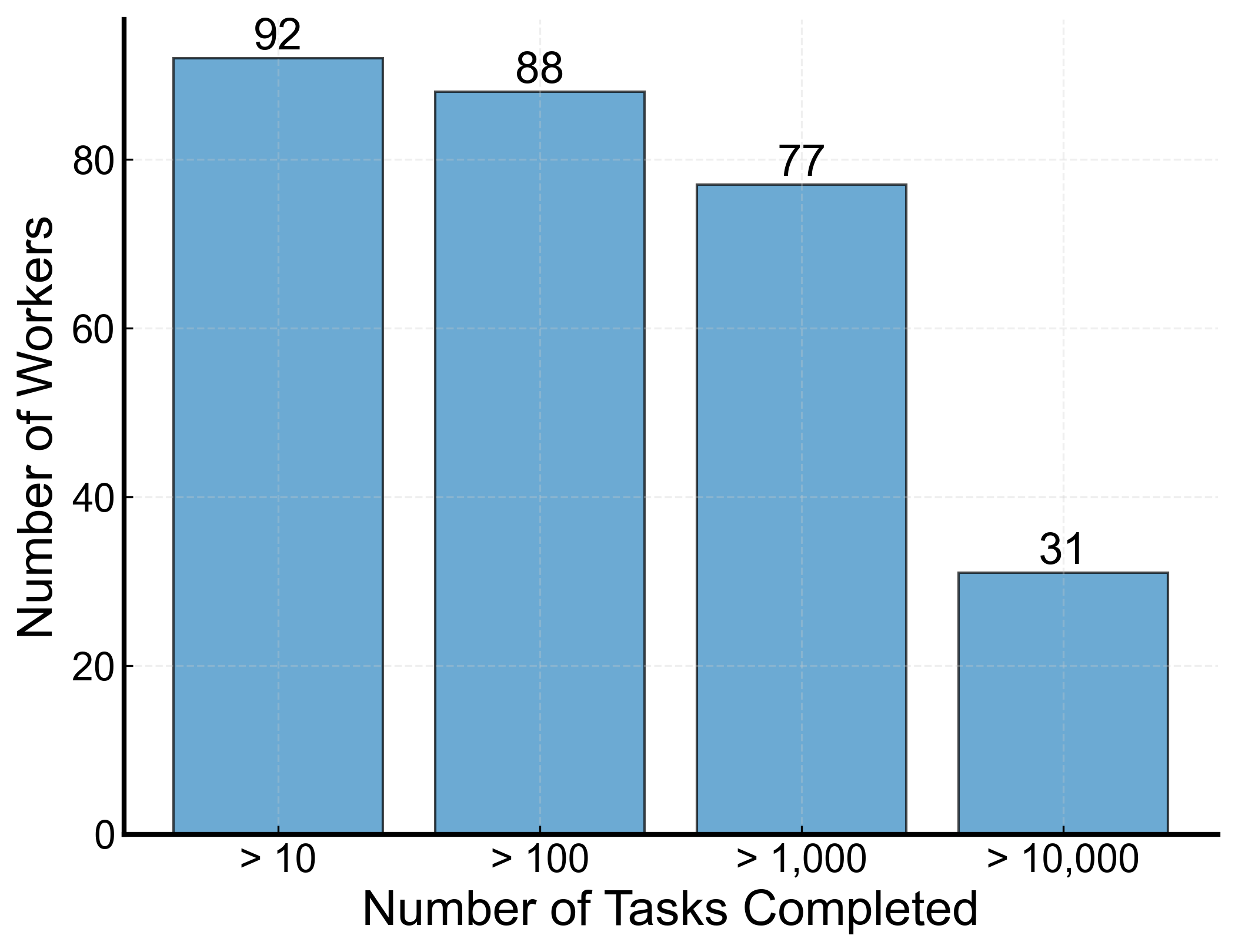}
        \caption{ECPD}
        \label{fig:tasks_sorted_by_worker_ecpd}
    \end{subfigure}
    \hfill
    \begin{subfigure}[b]{0.49\textwidth}
        \centering
        \includegraphics[width=\textwidth]{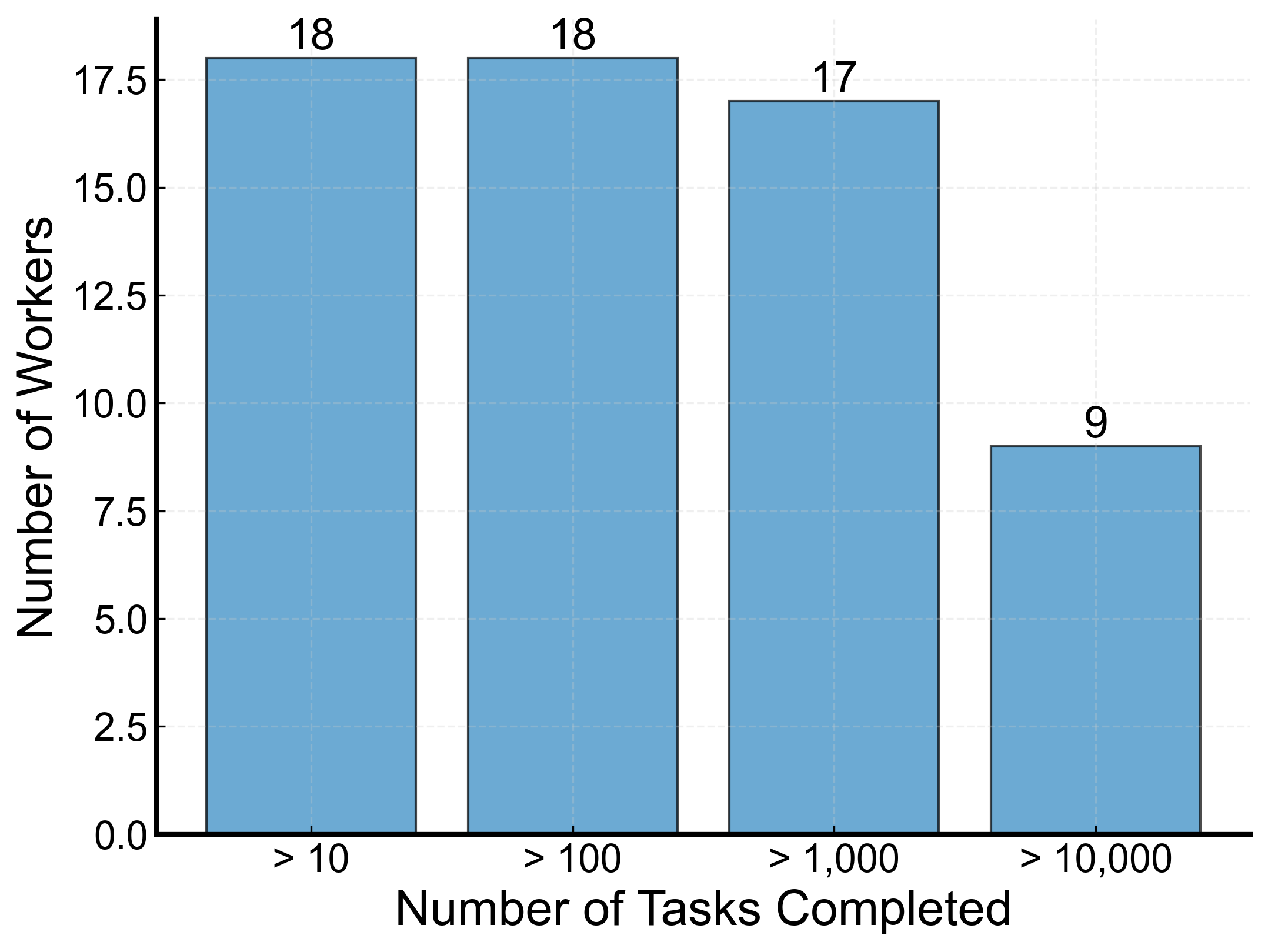}
        \caption{ZOD}
        \label{fig:tasks_sorted_by_worker_zod}
    \end{subfigure}
    \caption{
        The number of tasks completed by each annotator for the two datasets.
    }
    \label{fig:tasks_sorted_by_worker}
\end{figure}

\begin{figure}[t!]
    \centering
    \begin{subfigure}[b]{0.49\textwidth}
        \centering
        \includegraphics[width=0.49\textwidth]{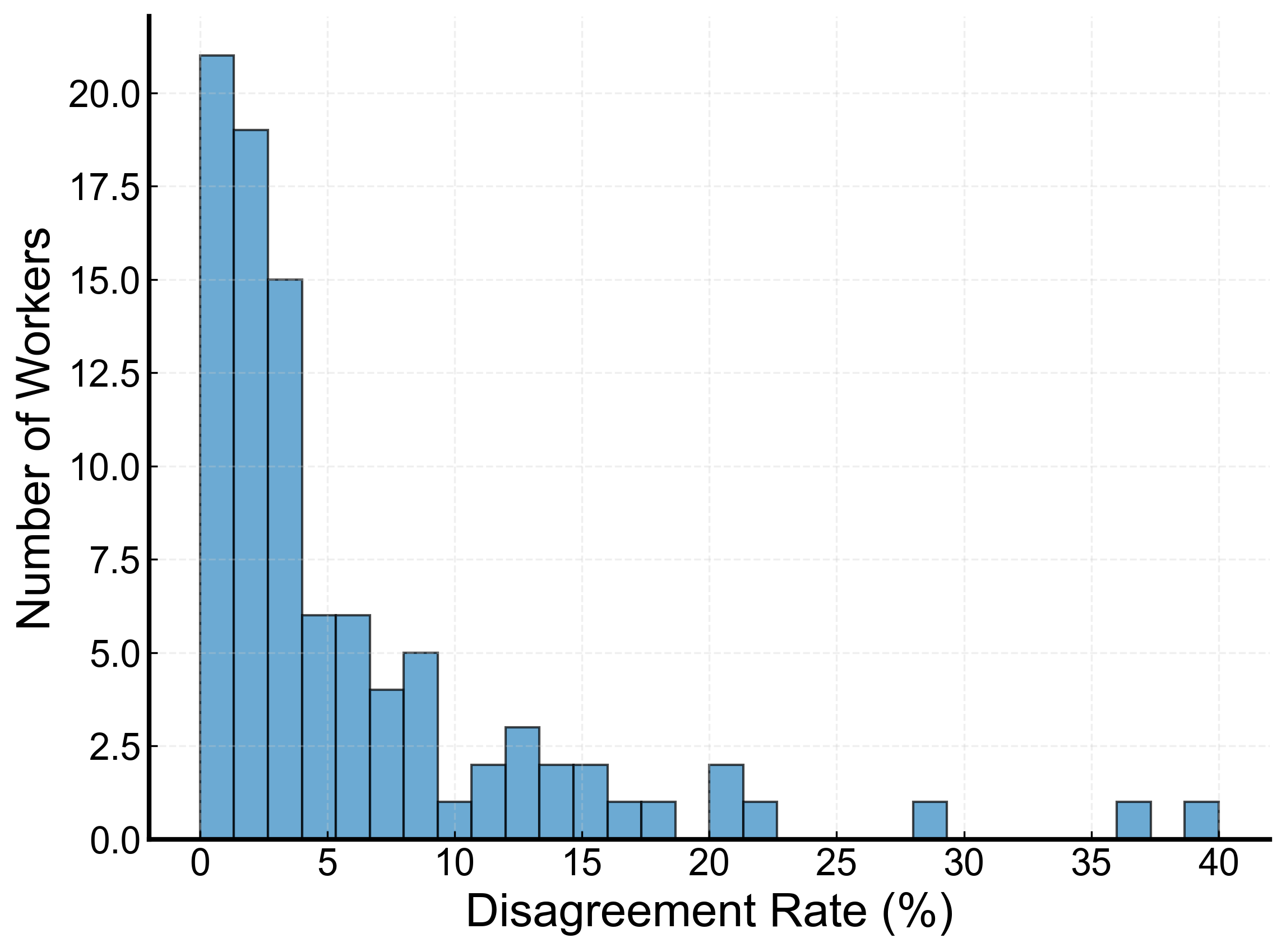}
        \includegraphics[width=0.49\textwidth]{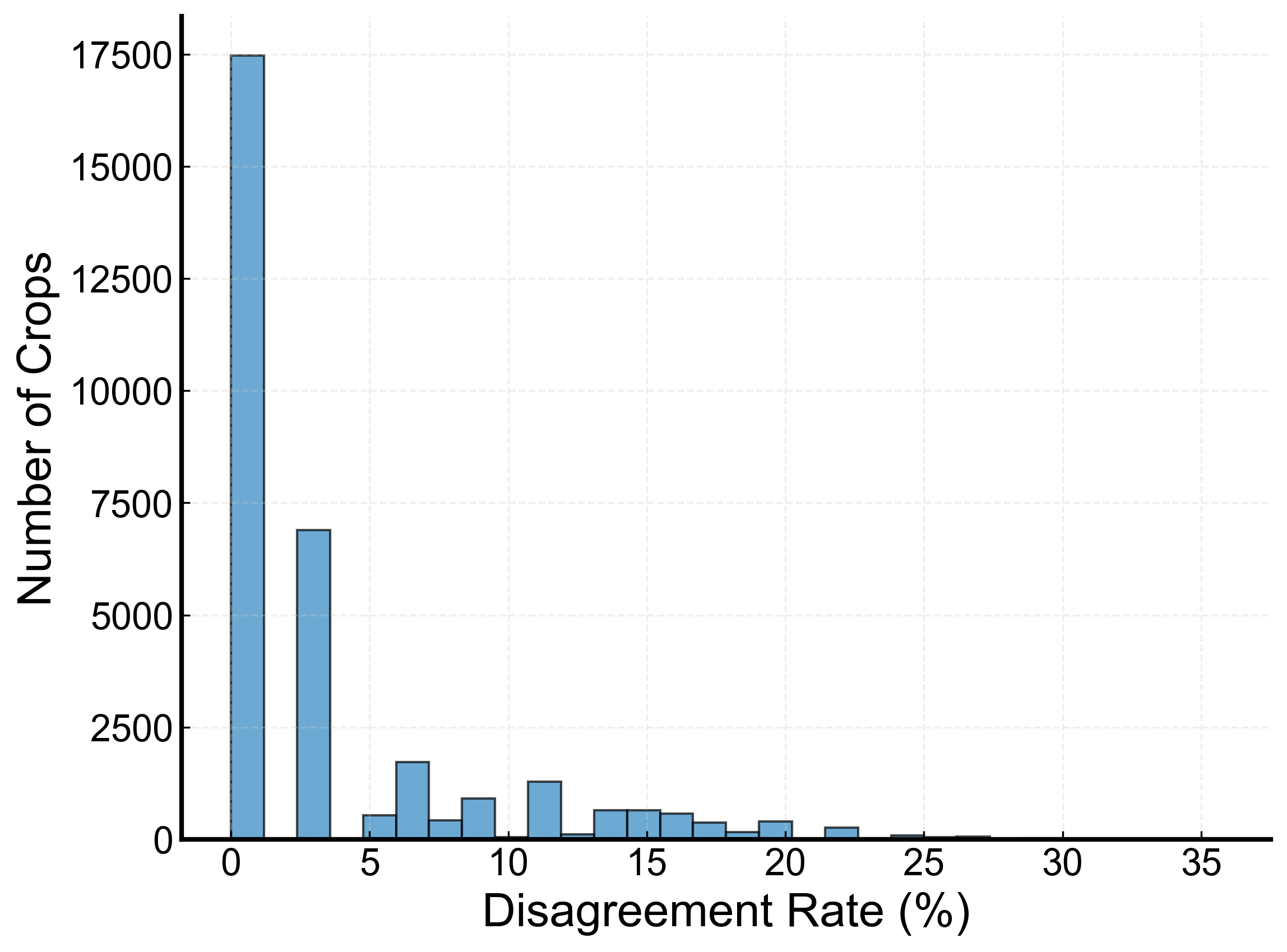}
        \caption{ECPD}
        \label{fig:preliminary_observations_workers_crops_ecpd}
    \end{subfigure}
    \hfill
    \begin{subfigure}[b]{0.49\textwidth}
        \centering
        \includegraphics[width=0.49\textwidth]{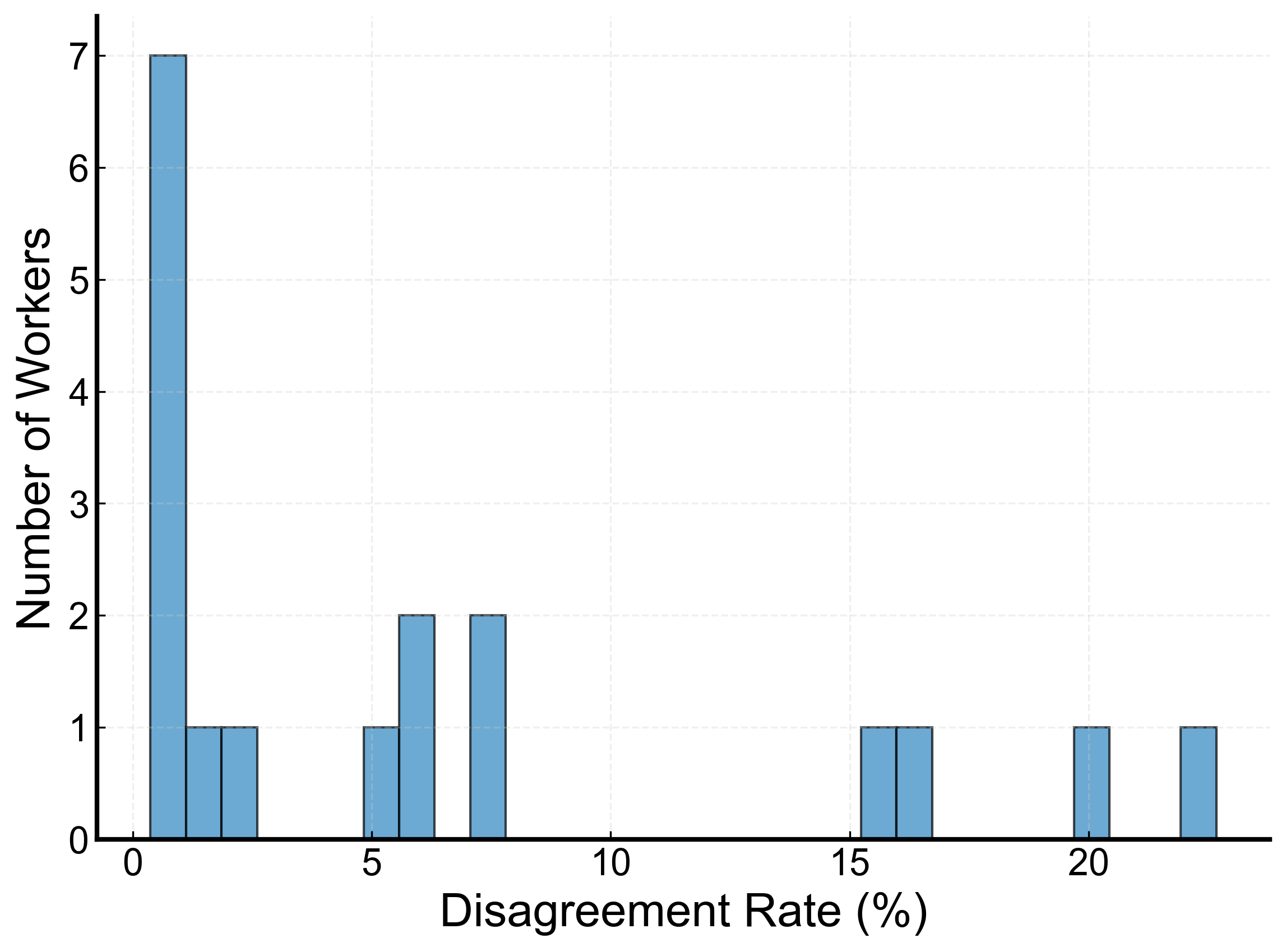}
        \includegraphics[width=0.49\textwidth]{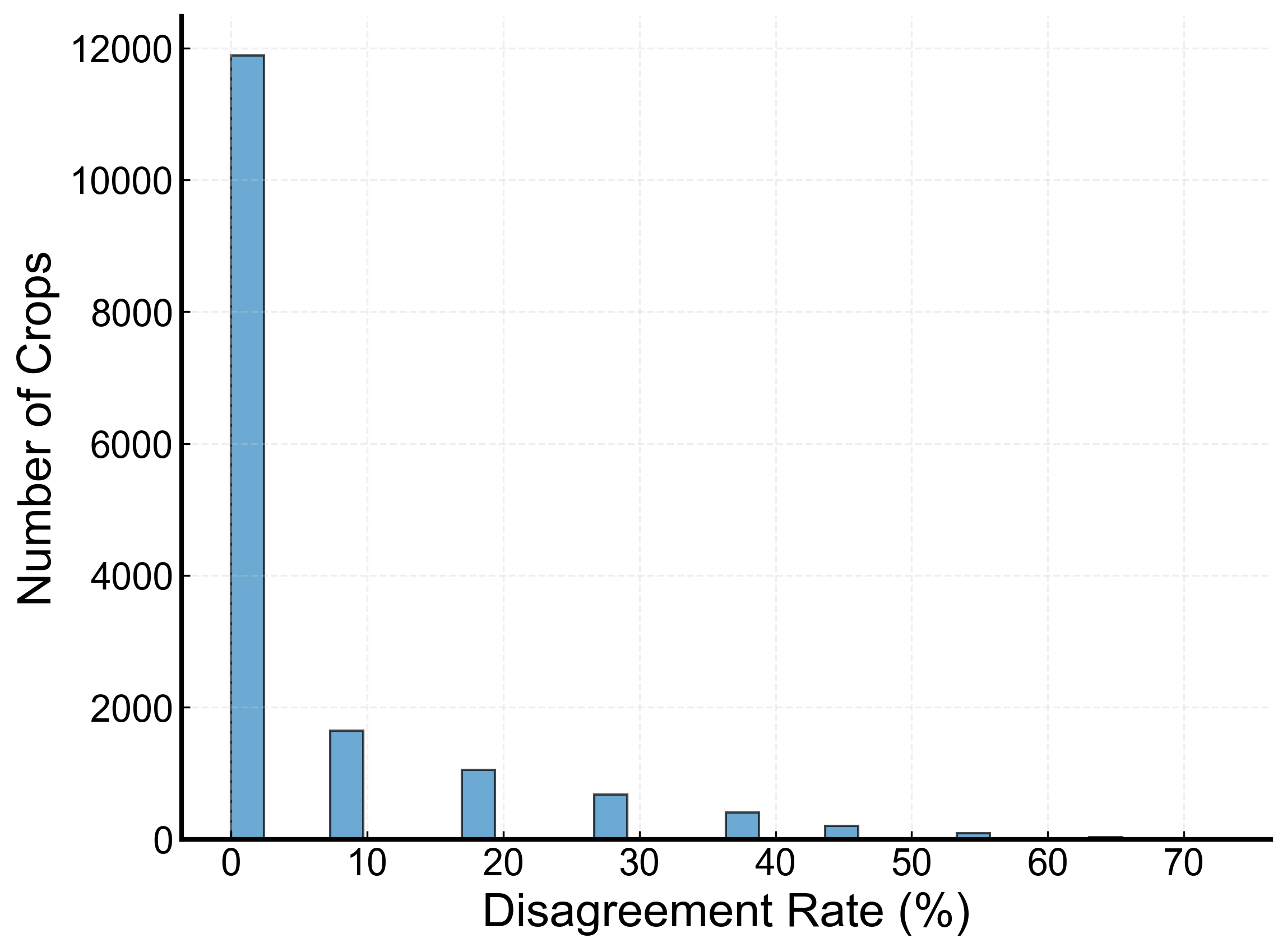}
        \caption{ZOD}
        \label{fig:preliminary_observations_workers_crops_zod}
    \end{subfigure}
    \caption{
        For both annotation datasets:
        \emph{(Left)} the histogram of disagreement rates per worker;
        \emph{(Right)} the histogram of disagreement rates per crop.
    }
    \label{fig:preliminary_observations_workers_crops}
\end{figure}

\begin{figure}[t!]
    \centering
    \begin{subfigure}[b]{0.74\textwidth}
        \centering
        \includegraphics[width=\textwidth]{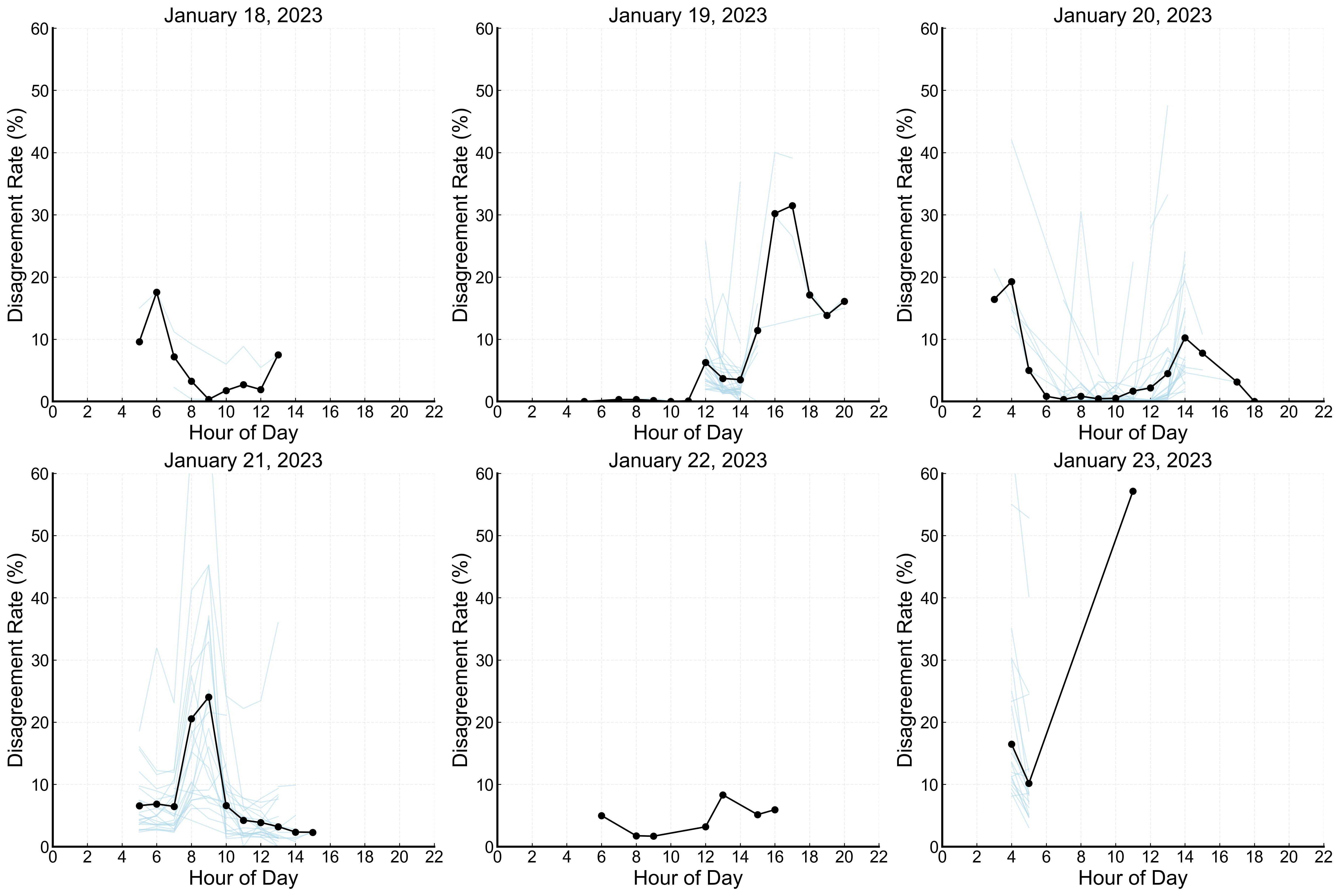}
        \caption{ECPD}
        \label{fig:preliminary_observations_exhaustion_ecpd}
    \end{subfigure}
    \hfill
    \begin{subfigure}[b]{0.24\textwidth}
        \centering
        \includegraphics[width=\textwidth]{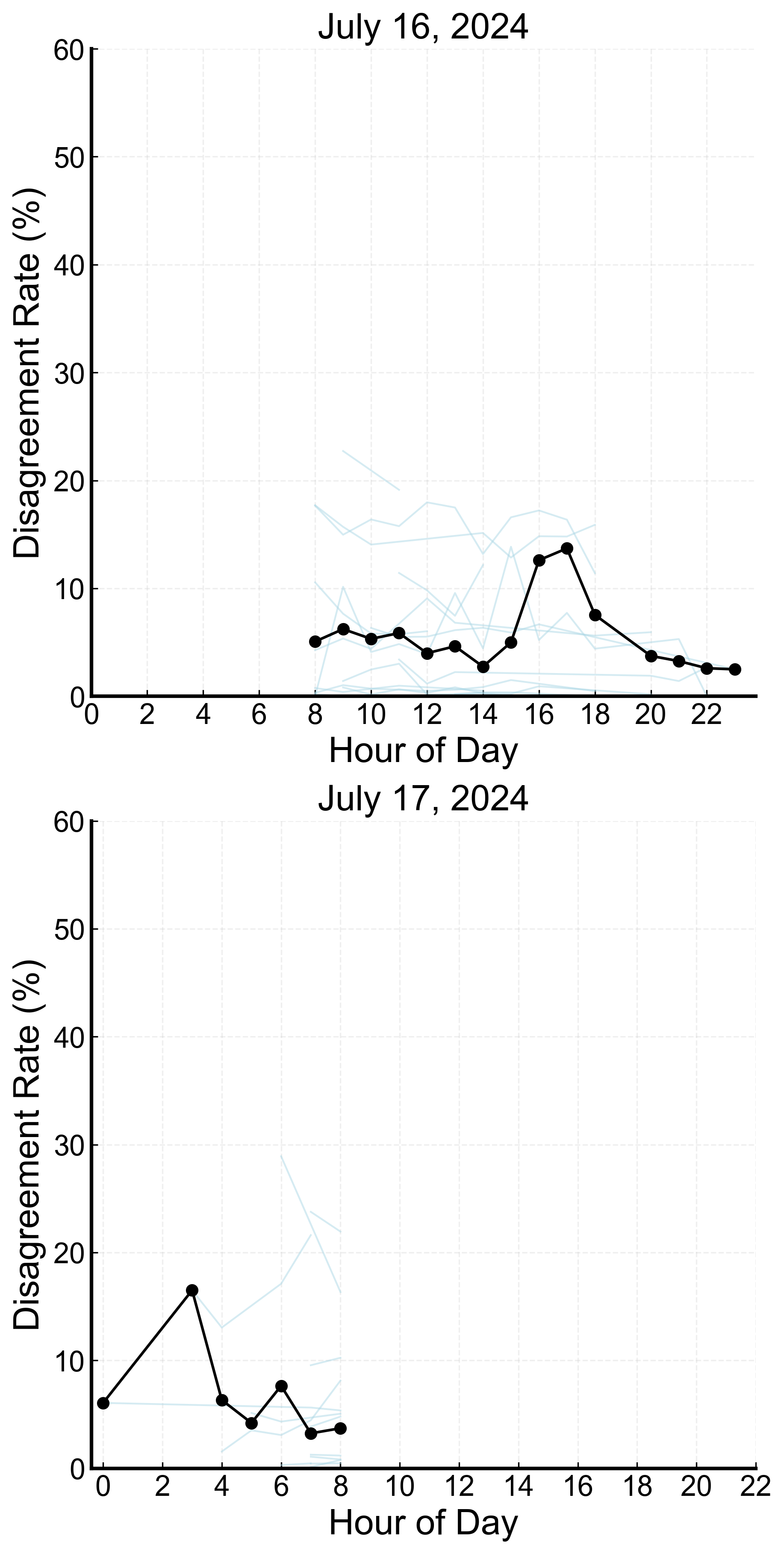}
        \caption{ZOD}
        \label{fig:preliminary_observations_exhaustion_zod}
    \end{subfigure}
    \caption{
        The hourly disagreement rate on each day of the study. The solid black lines reflect the average disagreement rate over all active workers during that hour and the light blue lines reflect the disagreement rates for individual workers.
    }
    \label{fig:preliminary_observations_exhaustion}
\end{figure}

We use both datasets to explore the effect of factors affecting the likelihood of a minority report.
First, some objects that are in the distance, partially occluded, or blurry, can be visually hard to distinguish with respect to questions (see Figure \ref{fig:ecpd_example}).
Figure \ref{fig:preliminary_observations_workers_crops} visualizes the histogram of the crop disagreement rate, i.e., proportion of minority reports over all repeats in each crop. We observe Geometric distributions for both datasets. 
Second, crowd-sourced workers have variable skill levels for data annotation \citep{raykar2010learning, karger2014budget}. Figure \ref{fig:preliminary_observations_workers_crops} visualizes the histogram of the worker disagreement rate, i.e., proportion of minority reports produced by each worker. As with crops, we observe Geometric distributions. 
Finally, workers are subject to an exhaustion effect where their abilities to correctly answer questions varies as a function of how long they have been working without any rest. Figure \ref{fig:preliminary_observations_exhaustion} visualizes the average disagreement rate per hour aggregated over all workers on a given hour during both the ECPD and ZOD studies. We observe that the hourly disagreement rates follow a bathtub curve \citep{o2011practical}. Intuitively, workers at the start of their day experience a warm-up period where they may be prone to making more minority reports, before achieving a steady state, until finally increasing again near the end of the shift.

\section{Empirical analysis}
\label{sec:empirical}

We empirically analyze ECPD and ZOD to model, given an annotation task assigned to a worker, the likelihood of the response being a minority report. We first formulate our logistic regression model, which captures the effect of crop and worker variability as well as worker exhaustion. We then demonstrate the effectiveness of our model and validate the importance of these covariates.

\subsection{Model}
\label{sec:empirical_model}

Consider a set of image crops indexed by $i \in \{1, 2, \cdots, I\}$ where $I$ is the total number of image crops to annotate, and a set of annotators indexed by $j \in \{1, 2, \cdots, J\}$ where $J$ is the total number of annotators in the labor pool. Finally, suppose that for each image crop, we must annotate $K$ different questions, indexed by $k$. 
Given an annotation task $(i, k)$ completed by worker $j$, let $y_{i,j,k} \in \{0, 1\}$ be a binary response where $y_{i,j,k} = 1$ denotes a minority report where the worker response is either a `can't solve' or the opposite response to the majority vote. Moreover, let $p_{i,j,k}$ denote the fitted logistic probability of the response being a minority report.
Consider the following logistic regression model:
\begin{align}
    \label{eq:main_logreg_model}
    \logit( p_{i,j,k} ) = u_i + v_j + t_{i,j,k} \beta_{t,1} + t^2_{i,j,k} \beta_{t,2} + \blambda^\tpose \bx_{i,j,k} + \epsilon_{i,j,k}
\end{align}
where $u_i$ and $v_j$ refer to latent random effect variables corresponding to the relative ambiguity in image crop $i$ and the relative skill level of worker $j$, respectively. 
Furthermore, $t_{i,j,k}$ refers to the amount of time (in hours) that the annotator has been continuously active on the annotation platform, while $\beta_{t,1}$ and $\beta_{t,2}$ are variables to capture the bathtub effect of worker activity. 
Given that the average time to completion of a task on ECPD is 0.91 seconds, we define a continuous period of activity as a period where the time to completion of any task has not exceeded 10 minutes. In the Online Appendix \ref{sec:app_robustness_to_ct}, we explore the sensitivity of our model to different definitions of continuous activity. Finally, $\blambda^\tpose \bx_{i,j,k}$ describes the effect of control variables and $\epsilon_{i,j,k}$ represents unobserved error. All models control for the type of question being asked (i.e., of the six questions in ECPD, some may be more challenging than others), as well as the day-of-experiment.

Conditioned on an aggregated label for a crop, the likelihood of a specific task instance being in the minority is not i.i.d. with respect to the other repeats for the same task. 
Furthermore, repeats may exhibit additional temporal dependency. 
We do not model either of these conditions due to our downstream goal of pruning annotation task assignments ex ante, when we would not have this auxiliary ground truth information. Consequently, we treat minority report events as i.i.d., but we discuss residual diagnostics in the Online Appendix \ref{sec:app_residuals}.

\begin{table}[t!]
\centering
\small
\begin{tabular}{lccccc}
\toprule

& Base & A & A + W & A + C & A + W + C \\

\midrule

Continuous activity & --- & $0.399^{***}$ & $0.446^{***}$ & $0.491^{***}$ & $0.542^{***}$ \\
 &  & $($0.020$)$ & $($0.022$)$ & $($0.022$)$ & $($0.022$)$ \\

Continuous activity$^2$ & --- & $1.428^{***}$ & $1.306^{***}$ & $1.319^{***}$ & $1.212^{***}$ \\
 &  & $($0.007$)$ & $($0.008$)$ & $($0.008$)$ & $($0.008$)$ \\

 \midrule


Crop effect $u_i$ (SD)          & --- & --- & ---               & 1.258 & 1.261         \\
Worker effect $v_j$ (SD)        & --- & --- & 1.228  & ---               & 1.204         \\ 

\midrule

Human being & $13.415^{***}$ & $11.401^{***}$ & $11.318^{***}$ & $13.742^{***}$ & $13.892^{***}$ \\
 & $($0.034$)$ & $($0.034$)$ & $($0.037$)$ & $($0.035$)$ & $($0.035$)$ \\

On bike & $27.515^{***}$ & $21.807^{***}$ & $26.420^{***}$ & $12.706^{***}$ & $16.353^{***}$ \\
 & $($0.045$)$ & $($0.045$)$ & $($0.063$)$ & $($0.047$)$ & $($0.065$)$ \\

On wheels & $4.966^{***}$ & $4.389^{***}$ & $3.336^{***}$ & $6.057^{***}$ & $4.679^{***}$ \\
 & $($0.037$)$ & $($0.037$)$ & $($0.041$)$ & $($0.039$)$ & $($0.040$)$ \\

Poster & $0.638^{***}$ & $0.577^{***}$ & $0.440^{***}$ & $0.590^{***}$ & $0.458^{***}$ \\
 & $($0.043$)$ & $($0.043$)$ & $($0.045$)$ & $($0.043$)$ & $($0.041$)$ \\

Statue/mannequin & $2.978^{***}$ & $2.824^{***}$ & $1.598^{***}$ & $2.828^{***}$ & $1.523^{***}$ \\
 & $($0.031$)$ & $($0.032$)$ & $($0.037$)$ & $($0.032$)$ & $($0.035$)$ \\

Jan 18 & $0.569^{***}$ & $0.604^{***}$ & $1.198$ & $1.541^{***}$ & $4.933^{***}$ \\
 & $($0.065$)$ & $($0.066$)$ & $($0.196$)$ & $($0.070$)$ & $($0.183$)$ \\

Jan 19 & $0.069^{***}$ & $0.091^{***}$ & $0.037^{***}$ & $0.381^{***}$ & $0.152^{***}$ \\
 & $($0.052$)$ & $($0.052$)$ & $($0.072$)$ & $($0.057$)$ & $($0.076$)$ \\

Jan 20 & $0.310^{***}$ & $0.334^{***}$ & $0.244^{***}$ & $0.838^{***}$ & $0.628^{***}$ \\
 & $($0.035$)$ & $($0.036$)$ & $($0.041$)$ & $($0.039$)$ & $($0.043$)$ \\

Jan 21 & $0.302^{***}$ & $0.344^{***}$ & $0.200^{***}$ & $0.674^{***}$ & $0.394^{***}$ \\
 & $($0.027$)$ & $($0.028$)$ & $($0.032$)$ & $($0.031$)$ & $($0.034$)$ \\

Jan 22 & $0.312^{***}$ & $0.347^{***}$ & $0.126^{***}$ & $0.470^{***}$ & $0.239^{***}$ \\
 & $($0.058$)$ & $($0.058$)$ & $($0.134$)$ & $($0.061$)$ & $($0.126$)$ \\

\midrule

Observations                            & 1,035,491       & 1,035,491       & 1,035,491             & 1,035,491             & 1,035,491             \\
Pseudo-$R^2$                 & 0.0906          & 0.0968           & 0.1667                & 0.1527               & 0.2203                \\ 

\bottomrule
\end{tabular}
\caption{Comparison of odds ratios for models on ECPD. Standard errors for the log-odds are show in parentheses. For random effects, we report the standard deviation (SD) over log odds-ratios. Significance levels: *** $p<0.001$, ** $p<0.01$, * $p<0.05$.}
\label{tab:model_abcd_ecpd}
\end{table}

\begin{table}[t!]
\centering
\small
\begin{tabular}{lccccc}
\toprule

& Base & A & A + W & A + C & A + W + C \\

\midrule

Continuous activity$^2$ & --- & $1.103^{***}$ & $1.016$ & $1.149^{***}$ & $1.011$ \\
 &  & $($0.019$)$ & $($0.019$)$ & $($0.023$)$ & $($0.022$)$ \\

Continuous activity & --- & $1.023$ & $0.963$ & $1.022$ & $0.981$ \\
 &  & $($0.047$)$ & $($0.050$)$ & $($0.055$)$ & $($0.056$)$ \\

 \midrule

Crop effect $u_i$ (SD) & --- & --- & --- & $2.456$ & $2.540$ \\

Worker effect $v_j$ (SD) & --- & --- & $1.493$ & --- & $1.640$ \\

\midrule

Jul 16 & $1.144^{***}$ & $1.074^{**}$ & $1.034$ & $1.085^{**}$ & $1.093^{*}$ \\
 & $($0.025$)$ & $($0.026$)$ & $($0.032$)$ & $($0.029$)$ & $($0.036$)$ \\

\midrule

Observations                            & 175,962       & 175,962      & 175,962             & 175,962             & 175,962            \\
Pseudo-$R^2$                 & 0.0004          & 0.0034           & 0.1589                & 0.1375               & 0.3188                \\ 

\bottomrule
\end{tabular}
\caption{Comparison of odds ratios for models on ZOD. Standard errors for the log-odds are show in parentheses. For random effects, we report the standard deviation (SD) over log odds-ratios. Significance levels: *** $p<0.001$, ** $p<0.01$, * $p<0.05$.}
\label{tab:model_abcd_zod}
\end{table}

\begin{figure}[t!]
    \centering
    \begin{subfigure}[b]{0.99\textwidth}
        \centering
        \includegraphics[width=0.49\textwidth]{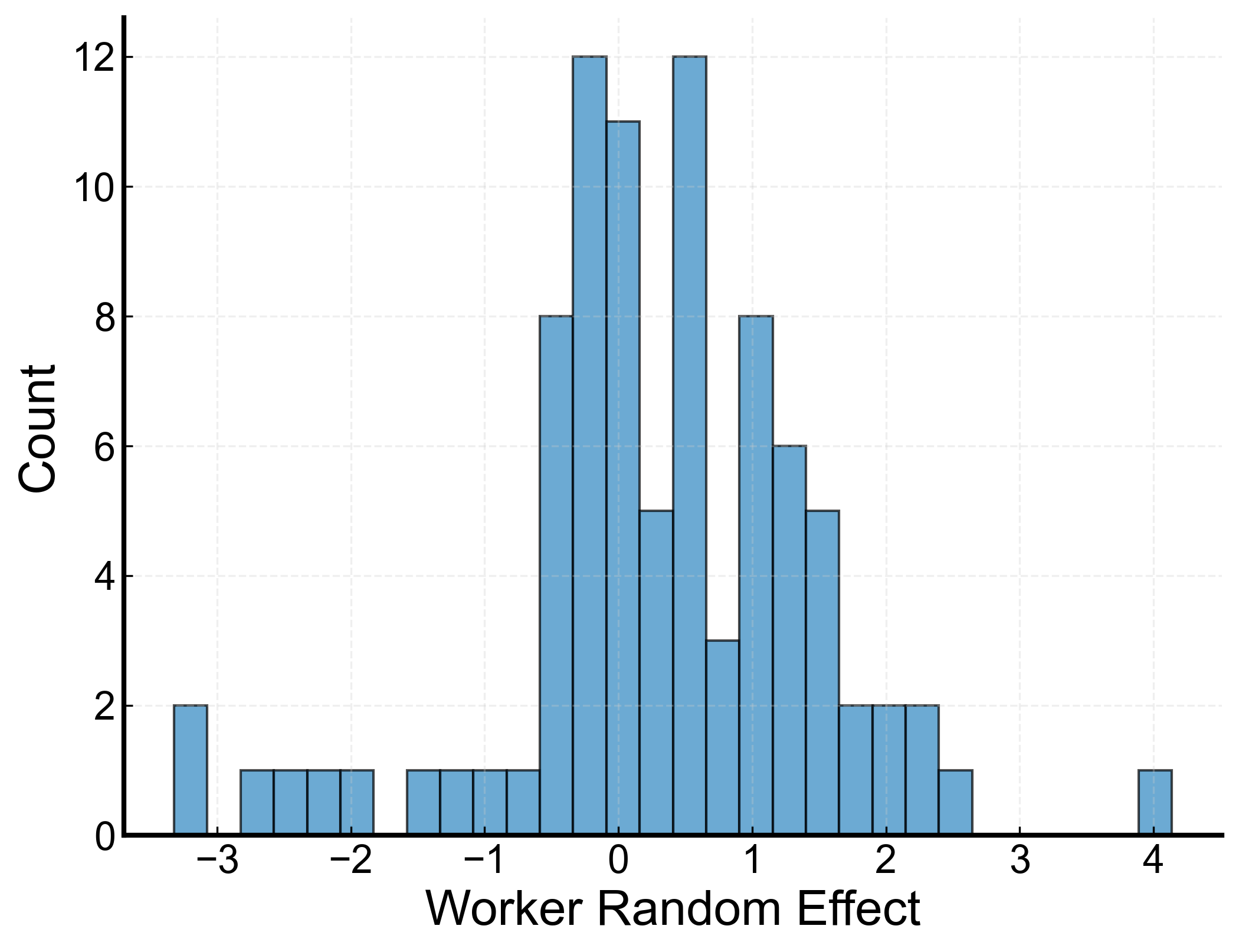}
        \includegraphics[width=0.49\textwidth]{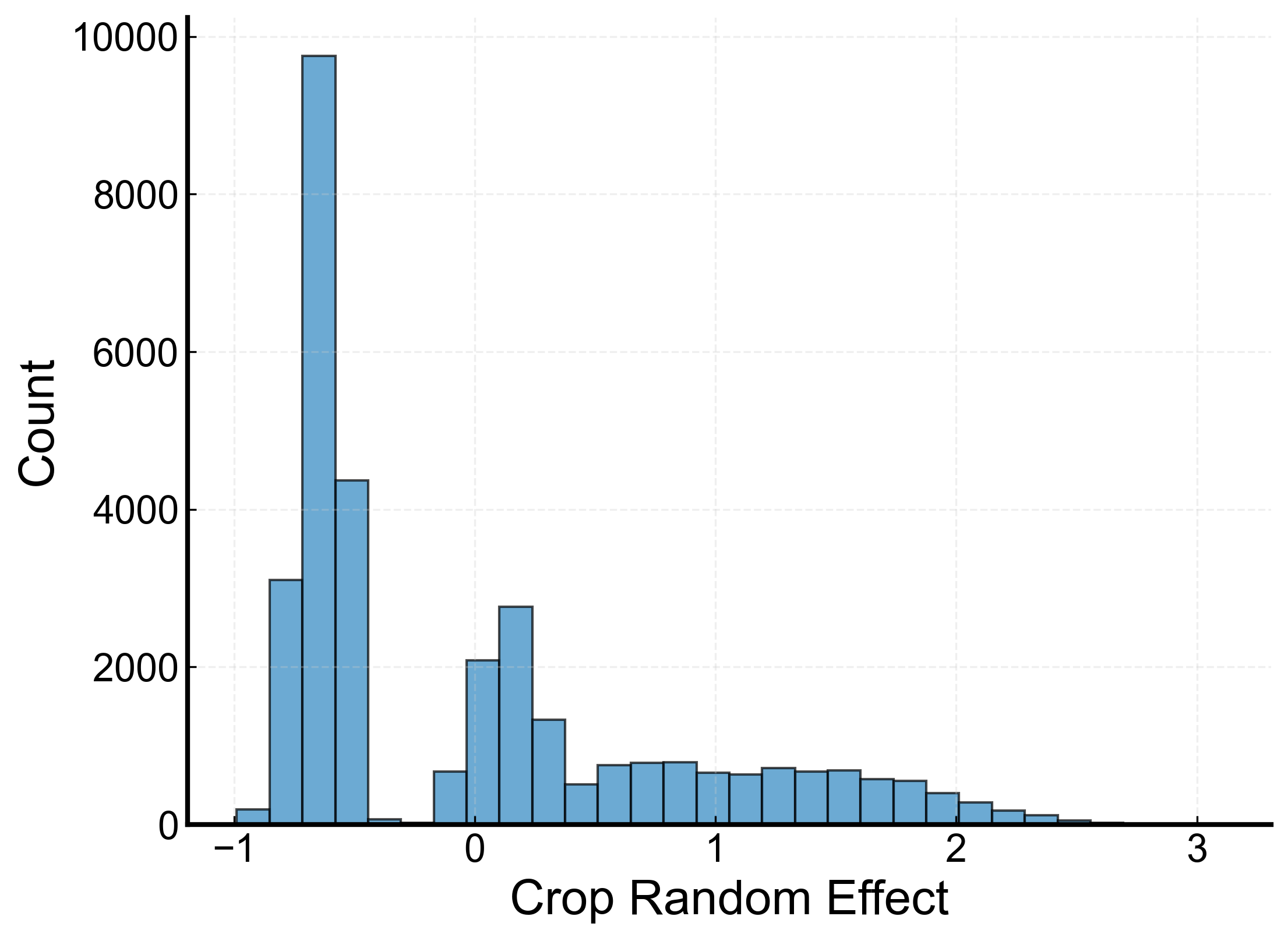}
        \caption{ECPD}
        \label{fig:random_effects_histograms_ecpd}
    \end{subfigure}
    \hfill
    \begin{subfigure}[b]{0.99\textwidth}
        \centering
        \includegraphics[width=0.49\textwidth]{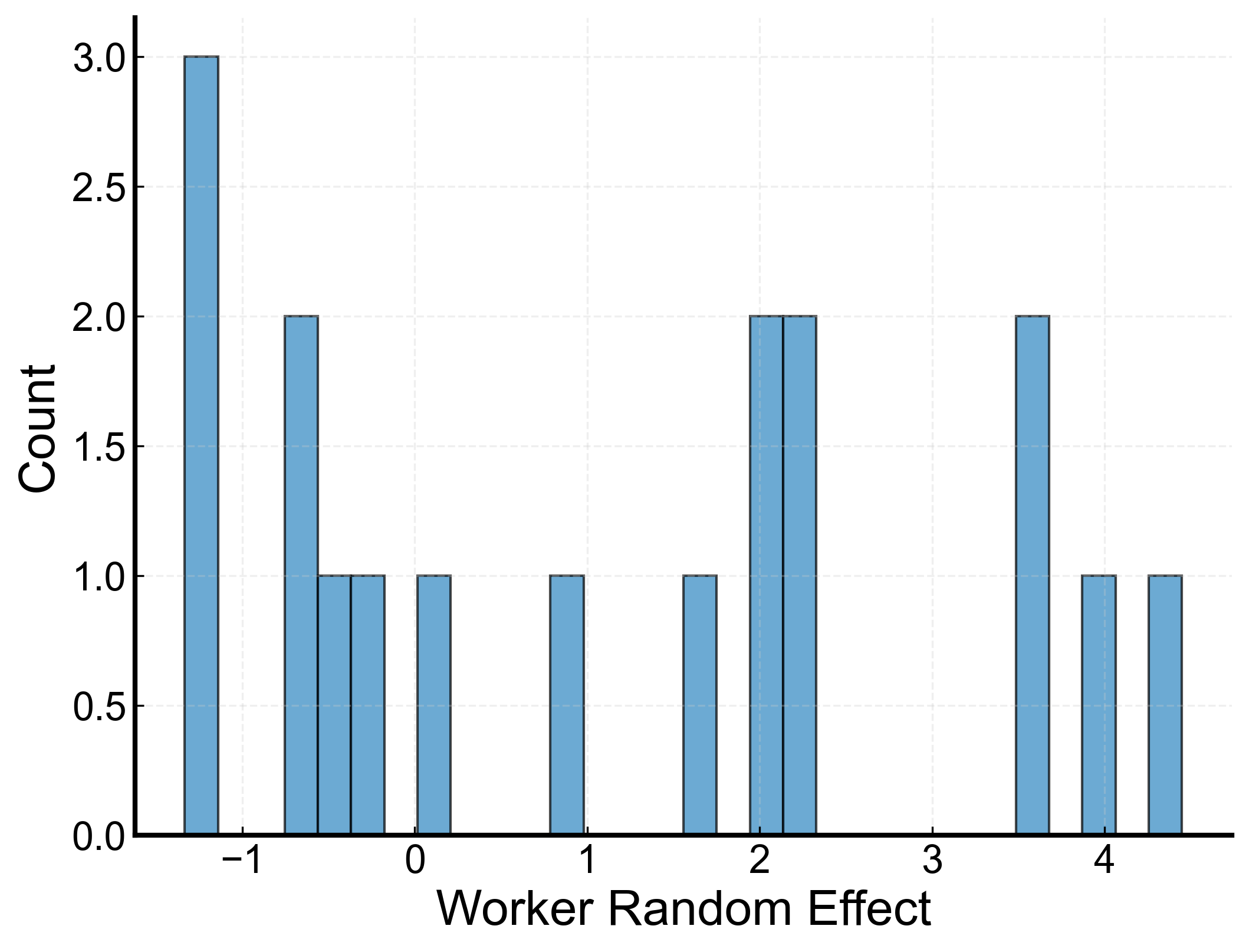}
        \includegraphics[width=0.49\textwidth]{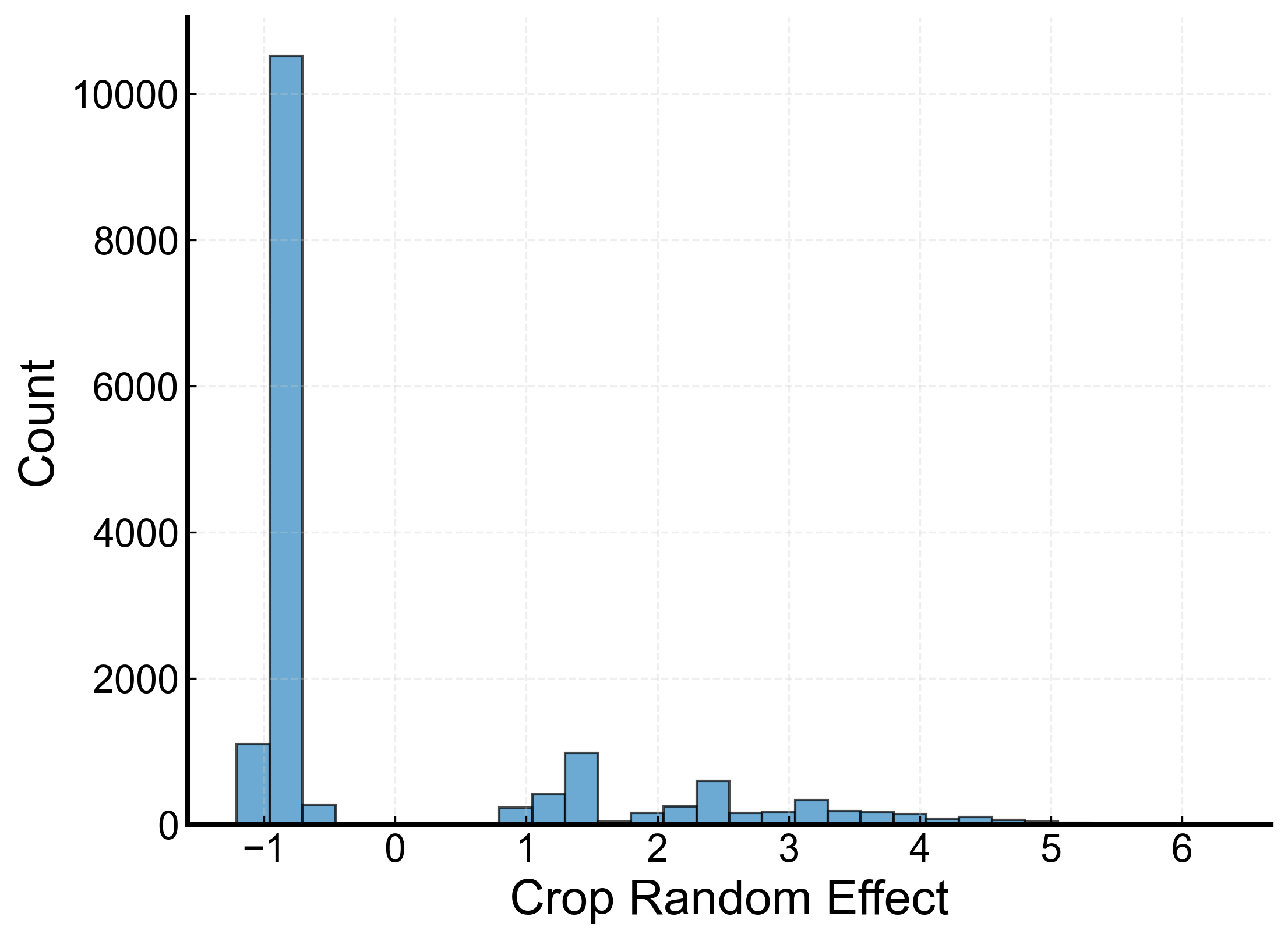}
        \caption{ZOD}
        \label{fig:random_effects_histograms_zod}
    \end{subfigure}
    \caption{
        For both annotation datasets:
        \emph{(Left)} the histogram of worker effects $v_j$;
        \emph{(Right)} the histogram of crop effects $u_i$. 
        These effects represent the log-odds of a minority report given a respective worker or crop.
    }
    \label{fig:random_effects_histograms}
\end{figure}

\begin{figure}[h!]
    \centering
    \begin{subfigure}[b]{0.49\textwidth}
        \centering
        \includegraphics[width=\textwidth]{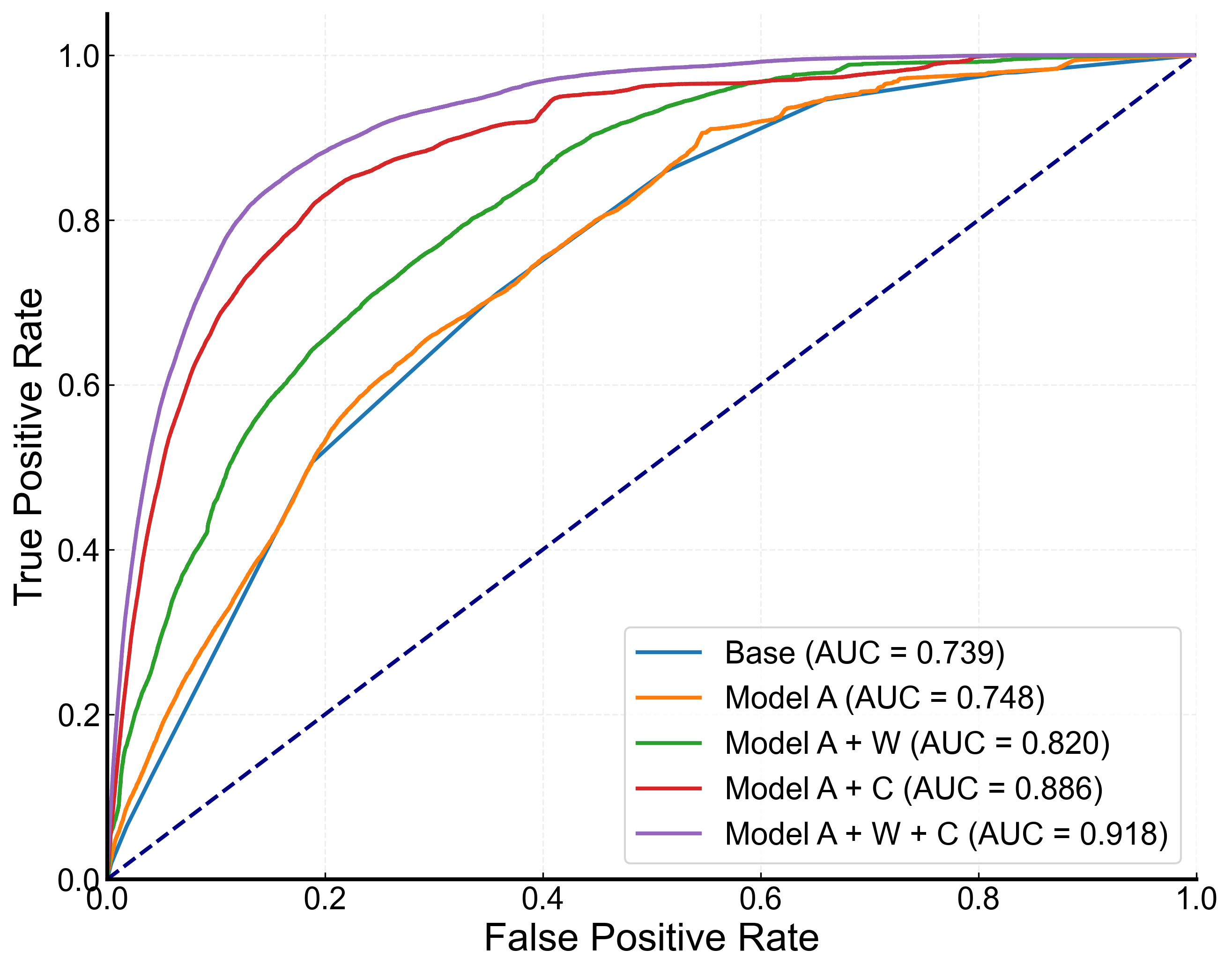}
        \caption{ECPD}
        \label{fig:roc_curves_comparison_ecpd}
    \end{subfigure}
    \hfill
    \begin{subfigure}[b]{0.49\textwidth}
        \centering
        \includegraphics[width=\textwidth]{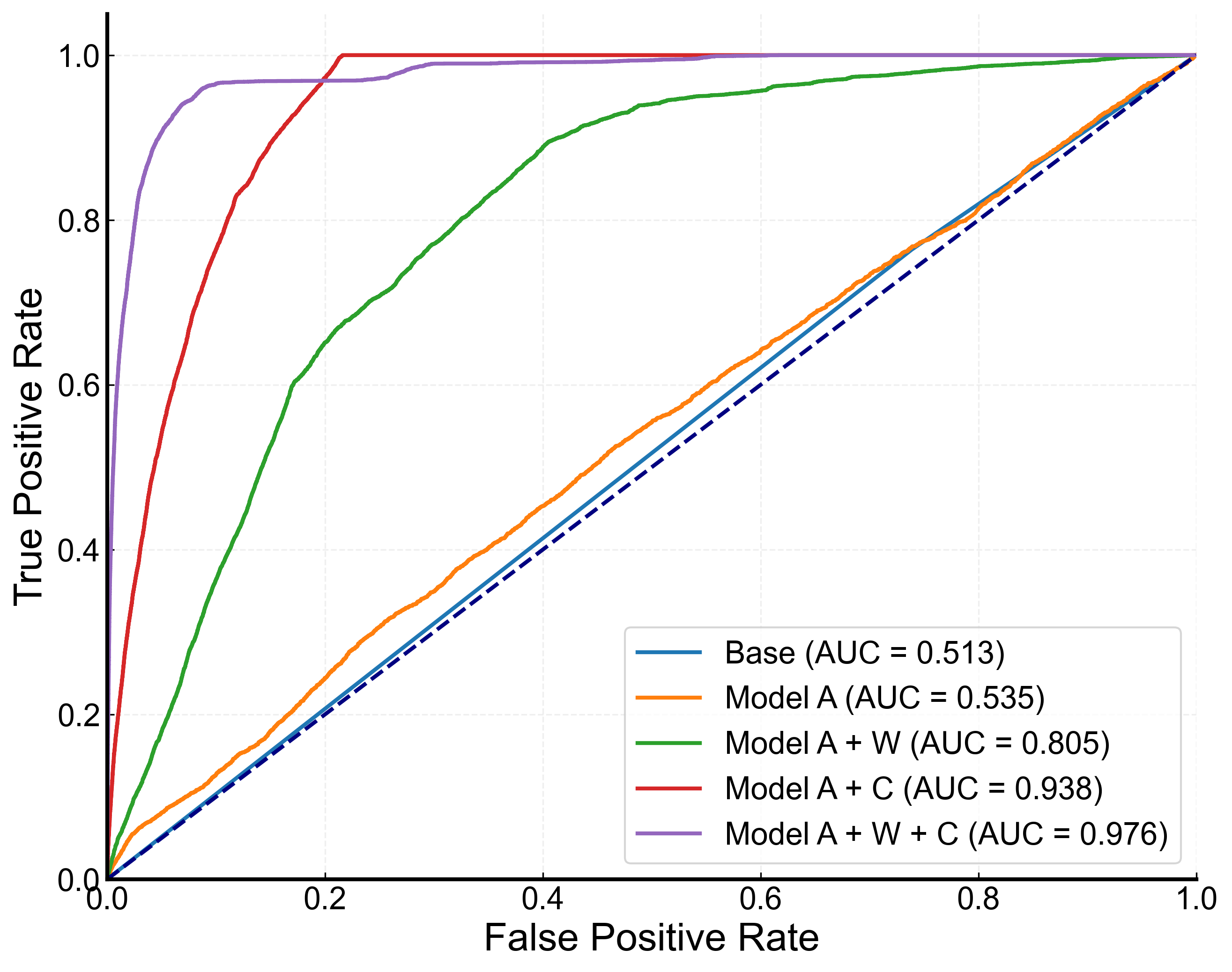}
        \caption{ZOD}
        \label{fig:roc_curves_comparison_zod}
    \end{subfigure}
    \caption{
        ROC curves of model accuracy over both datasets. We analyze ROC in-sample to ensure all models are evaluated on the same dataset and have access to worker and crop random effects.
    }
    \label{fig:roc_curves_comparison}
\end{figure}

\subsection{Results}
\label{sec:empirical_results}

Table \ref{tab:model_abcd_ecpd} summarizes the parameter estimates and goodness-of-fit of model \eqref{eq:main_logreg_model} on ECPD, while Table \ref{tab:model_abcd_zod} summarizes the results on ZOD. For comparison, we develop five variants: (i) \emph{Base} containing only the control variables $\bx_{i,j,k}$; (ii) \emph{Activity (A)} containing the control and the continuous activity variables $t_{i,j,k}$; (iii) \emph{Activity + Worker (A + W)} which include worker random effects $v_j$; (iv) \emph{Activity + Crop (A + C)} which include crop random effects $u_i$; and (v) \emph{Activity + Worker + Crop (A + W + C)} which include both worker and crop random effects. We report McFadden's Pseudo-$R^2$ for all models by comparing against a null model with no fixed or random effects.
Furthermore, Figure \ref{fig:roc_curves_comparison} visualizes ROC curves for each model over the two datasets. In the Online Appendix \ref{sec:app_residuals}, we include additional results on our final Model A + W +C, including residual diagnostics.

We first discuss the key insights drawn from the ECPD study, before discussing the ZOD study.
For ECPD, the continuous activity variables are always statistically significant \pleooone~when included. Furthermore, the odds ratio of the squared term is always greater than $1$, which confirms the bathtub effect where the odds of a worker disagreeing on a given task is concave and decreases over time to a minimum point after which it then increases. In Figure \ref{fig:roc_curves_comparison_ecpd}, Model A has a higher Area-Under-Curve (AUC) above the Base model by $0.1$. Finally, we compare Model A with the Base model by a Likelihood Ratio Test (LRT) to confirm that this model presents a statistically significant improvement over the baseline \pleooone.

Incorporating worker-level (A + W) and crop-level (A + C) random effects improves upon Model A. 
The Pseudo-$R^2$ of these two models is larger than then Pseudo-$R^2$ of Model A by $1.72\times$ and $1.58\times$. 
LRTs against Model A show that both Model A + W and Model A + C give statistically significant improvements over Model A \pleooone. 
Finally, Figure \ref{fig:roc_curves_comparison_ecpd} compares the ROC curves of these models to show that both of these models individually improve the AUC by $0.072$ and $0.138$ respectively. Interestingly, crop-level effects improve the predictability of disagreements by a larger margin than worker-level effects.

Finally, we combine both random effects in Model A + W + C. This model maintains similar fixed effect odds ratios and standard deviations on the random effects, while raising the Pseudo-$R^2$ over Model A + W and Model A + C by $1.32\times$ and $1.44\times$, respectively. LRTs confirm that Model A + W + C improves over both Model A + W and Model A + C \pleooone. Most importantly from Figure \ref{fig:roc_curves_comparison_ecpd}, this complete model achieves an AUC of $0.918$.
Figure \ref{fig:random_effects_histograms_ecpd} also plots the histogram of both sets of random effects for Model A + W + C. There exists a long right tail over crop random effects, suggesting that while most crops have comparable levels of difficulty, there are certain crops that are significantly more difficult to annotate than the others. In the tail, the odds of some crops yielding minority reports is $e^3 \approx 20\times$ higher. 
On the other hand, the worker random effect distribution is relatively symmetric, revealing a small set of highly-skilled annotators that offset the majority of the annotators which are more error-prone. The worst-performing annotators have $e^4 \approx 54\times$ higher odds of producing minority reports.

We now discuss the results on ZOD. First, incorporating continuous activity variables improves upon the Base model \pleooone. The squared continuous activity term consistently has an odds ratio greater than $1$; this verifies the bathtub curve effect observed on ECPD. However, only the squared term is statistically significant for Model A and A + C. 
Furthermore, incorporating both worker-level random effects (A + W) and crop-level random effects (A + C) improve upon Model A \pleooone, the Pseudo-$R^2$, and the AUC (see Figure \ref{fig:roc_curves_comparison_zod}). 
Finally, we fit the combined Model A + W + C, which significantly improves upon all previous models \pleooone, Pseudo-$R^2$, and AUC. Figure \ref{fig:random_effects_histograms_zod} plots the histogram of random effects for Model A + W + C. Here, the crop random effect variables replicate the histogram found in ECPD. In contrast, the worker random effect distribution is more uniform than in ECPD, which is likely due to the significantly fewer number of workers used for this study.

Both datasets confirm the three factors contributing to the likelihood of disagreements. First, crop-level variability can significantly affect disagreements. While most crops are relatively unambiguous and easy-to-annotate, there are a small subset of crops with very high levels of ambiguity. 
Furthermore, worker-level variability affects the predictability of disagreement events by a larger degree than crop-level variability. 
These two trends suggest that disagreements of annotators on certain tasks can be largely predicted by estimating crop ambiguity and worker skill levels.
Finally, workers experience exhaustion according a bathtub reliability curve where the start-of and end-of a continuous period of activity are more likely to yield disagreeing annotations.

\section{Reducing repeats by pruning minority reports}
\label{sec:pruning}

We now leverage our predictive framework to proactively prune task assignments. 
Given a fixed set of task assignments over any given period, we estimate the likelihood of each assignment to yield a minority report, and then preemptively remove high-risk events from the annotation queue before they are assigned. 
Over 22\% of annotation repeats can be pruned while ensuring over 99\% of the task-level majority votes remain unchanged from their counterfactual ground truths. More aggressive pruning can remove over 60\% of the repeats while preserving over 96\% of the task-level majority votes. Interestingly, we reveal a simple decision rule which involves pruning all repeats of annotations (i.e., setting the classification probability threshold to zero), can achieve nearly the same results as our predictive approach, highlighting significant inherent redundancy in task assignments.
Finally, we ablate the importance of historical annotator data by showing that unnecessarily aggressive pruning of workers without estimating their skill effects can lead to a notable decline of around 2\% in dataset quality.

\begin{algorithm}[t!]
\small
\caption{Iterative Pruning of Minority Reports}
\label{alg:simulation_pruning}
\begin{algorithmic}[1]
\State \textbf{Input:} Set of assigned tasks $\set{I}_t := \{ (i,j,k) \}_t$ per each time period $t \in [T]$
\State \textbf{Initialize:} Initial observation period $\tau < T$, classification threshold $\theta \in [0, 1)$, Observed annotations $\set{D} = \emptyset$, Classification model $p(\cdot)$
\For{$t \in \{1, \cdots, \tau\}$}
    \State Observe all annotations and update $\set{D} \gets \set{D} \cup \set{I}_t$
\EndFor
\State Compute minority reports for all observed annotations in $\set{D}$ and fit model $p(\cdot)$
\For{$t \in \{\tau + 1, \cdots, T\}$}
    \For{Each assigned task $(i,j,k) \in \set{I}_t$}
        \If{If task instance $(i,k)$ and worker $j$ are not observed in $\set{D}$}
            \If{$p(i,j,k) > \theta$}
                \State Prune assignment $\set{I}_t \gets \set{I}_t \setminus \{ (i,j,k) \}$
            \EndIf
        \EndIf
    \EndFor
    \State Observe all remaining annotations and update $\set{D} \gets \set{D} \cup \set{I}_t$
    \State Update minority reports for all observed annotations in $\set{D}$ and re-fit model $p(\cdot)$
\EndFor
\State \textbf{Output:} Compute majority votes for all observed annotations in $\set{D}$
\end{algorithmic}
\end{algorithm}

\subsection{Simulation protocol}
\label{sec:pruning_sim}

We simulate over ECPD, which covers a large number of tasks and annotators spanned over multiple days. Algorithm \ref{alg:simulation_pruning} summarizes the steps.
Our simulation emulates the annotation workflow of \QualityMatch~and begins when a new dataset arrives and annotation tasks corresponding to image crops and questions are defined. Annotators dynamically join the pool and are allocated tasks without explicit control over these assignments.

We partition the total annotation period into $T$ discrete intervals of fixed length $\Delta$, representing the frequency at which we recalibrate an estimator of the likelihood of a minority report. For each interval $t$, let $\set{I}_t := \{ (i, j, k) \}$ be the set of assigned tasks.
During a warm-up period spanning the first $\tau < T$ intervals, we collect and observe all completed annotations without applying any pruning. 
Let $\set{D} = \cup_{t\leq \tau} \set{I}_t$ denote the set of all completed tasks at the end of this period.
At the end of the observation phase, we determine the majority vote for each task instance $(i,k)$ in $\set{D}$ and identify minority reports from the accumulated annotations. This historical data set is then used to fit a logistic regression model $p(i,j,k)$ that captures the likelihood of a minority report.

At the start of each subsequent time interval $t > \tau$, we analyze all task assignments for the interval based on the predictive model. For each tuple $(i,j,k)$ scheduled in the interval, we apply the following decision rule:
\begin{enumerate}
    \item If the worker $j$ has not been observed in $\set{D}$, retain the assignment without pruning.
    
    \item If the task $(i,k)$ has not been observed in $\set{D}$, retain the assignment without pruning.
    
    \item Otherwise, estimate the probability $p(i,j,k)$ of this assignment being a minority report. If $p(i,j,k) > \theta$ for a fixed threshold $\theta \in [0, 1)$, then prune the task from $\set{I}_t$.
\end{enumerate}
At the end of interval $t$, we then update our observed dataset of completed tasks $\set{D}$ with $\set{I}_t$. We then use the updated $\set{D}$ to recalibrate the logistic regression model for the next interval.
In our experiments, we establish $\tau = 36$ hours (i.e., Jan.~18 12:00AM to Jan.~19 12:00PM) and recalibrate the model every $\Delta = 1$ hour. We further evaluate intervals of $\Delta = 2, 4, 8, 24, \infty$ hours, where the final setting refers to no recalibration.

Our predictive model is a class-balanced mixed effects logistic regression using only worker and crop random effects and the question type. 
This model and our decision rule are designed for simplicity and flexibility. While we may consider alternative predictive models or sophisticated strategies, the proposed setup requires no additional data beyond the specific worker-task assignment. 
Moreover, the recalibration step requires less than two minutes of computational time on a standard MacBook M1 Pro laptop with 32 GB RAM. 
Finally, our framework permits additional constraints depending on annotation platform requirements (e.g., we may consider at least three repeats per annotation task to ensure a robust majority vote).

We benchmark our pruning strategy against two baselines. The first baseline, \emph{No Predictions (NP)}, employs a simple decision rule without any predictions where we prune all repeated assignments observed, i.e., pruning with a threshold $\theta = 0$. Interestingly, this naive strategy achieves nearly competitive performance to our predictive approach at aggressive pruning thresholds. The second baseline, \emph{All workers (AW)} relaxes the condition requiring prior worker observation (i.e., step 1 in the decision rule). This allows us to prune decisions even for newly observed workers by assigning zero random effects.

We evaluate all strategies along three metrics. We first track the Prune Rate, which quantifies the fraction of task repeats (out of 1,035,491) that were pruned from the queue. We then measure the final dataset quality in terms of the majority votes of each task instance post-simulated pruning versus the ground truth majority vote of the counterfactual where no repeats are pruned. Here, we use $F_1$ score and Accuracy with respect to the yes/no labels. The $F_1$ score addresses dataset imbalance as most task instances have ground truth response `no'.

\begin{table}[t]
\centering
\small
\begin{tabular}{@{}llccc@{}}
\toprule
                      & Experiment         & Accuracy & $F_1$ score & Prune rate ($\%$) \\ \midrule
\multirow{3}{*}{{Ours ($\Delta = 1$ hr.)}} 
                      & $\theta=0.99$       & 99.1    & 0.985     & 22.4            \\
                      & $\theta=0.93$       & 97.5    & 0.956     & 39.5            \\
                      & $\theta=0.1$        & 96.4    & 0.933     & 60.9            \\ \midrule
\multirow{5}{*}{{NP}}   
                      & $\Delta=1$ hr.            & 96.5    & 0.933     & 68.7            \\
                      & $\Delta = 2$ hr.          & 96.7    & 0.937     & 52.0            \\
                      & $\Delta = 4$ hr.          & 97.6    & 0.952     & 37.1            \\
                      & $\Delta = 8$ hr.          & 97.8    & 0.955     & 33.2            \\
                      & $\Delta = \infty$         & 99.3    & 0.986     & 11.7            
                      \\ \bottomrule
\end{tabular}
\caption{Dataset quality measured by accuracy and $F_1$ score versus the prune rate on ECPD for different strategies. NP refers to the No Pruning baseline.}
    \label{tab:main_results}
\end{table}

\subsection{Tradeoffs between cost and dataset quality}
\label{sec:pruning_tradeoffs}


Table \ref{tab:main_results} highlights the metrics for our pruning strategy. 
First, employing our predictive model with hourly recalibration ($\Delta = 1$ hour) and a high pruning threshold ($\theta = 0.99$) enables us to eliminate 22.4\% of the annotation repeats while maintaining an $F_1$ score of 0.985. Here, over 99\% of the 196,446 annotation tasks have the same majority vote as the counterfactual ground truth. Given that annotation instances take on average 0.91 seconds, this pruning translates to an estimated 2.4 days across the annotation period or approximately 36 minutes saved per annotator on average. 
Lowering the pruning threshold to $\theta = 0.93$ can remove 39.5\% of repeats, while maintaining an $F_1$ score of 0.956 or 97.5\% accuracy. This translates to removing an estimated 4.3 days or approximately 1.1 hours saved per annotator. 
Finally, an aggressive threshold $\theta = 0.1$ can prune 60.9\% of the repeats, while maintaining an $F_1$ score of 0.933 and 96.4\% accuracy. This translates to 6.6 days equivalent or approximately 1.7 hours per annotator over the study period saved.

There is an inherent tradeoff where pruning tasks can reduce the annotation time, and therefore, costs, but at the risk of lowering annotation quality. Benchmark ML datasets range between 1\% to 10\% in terms of the fraction of erroneous labels \citep{northcutt2021pervasive}. Even in our most aggressive pruning setting, our framework achieves an error rate of 4.6\%, while the more conservative pruning strategies yield error rates of 2.5\% and 0.9\%.

Different downstream ML tasks require different levels of quality. For instance, the error tolerance on computer vision for classifying pedestrians near a vehicle may be lower compared to consumer video conferencing applications such as detecting people in front of a web camera. An annotation team may select $\theta$ appropriately depending on the application to yield the corresponding appropriate reduction in annotation cost at the expense of some acceptable loss in dataset quality.

\begin{figure}[t!]
    \centering
    \centering
    \includegraphics[width=0.7\textwidth]{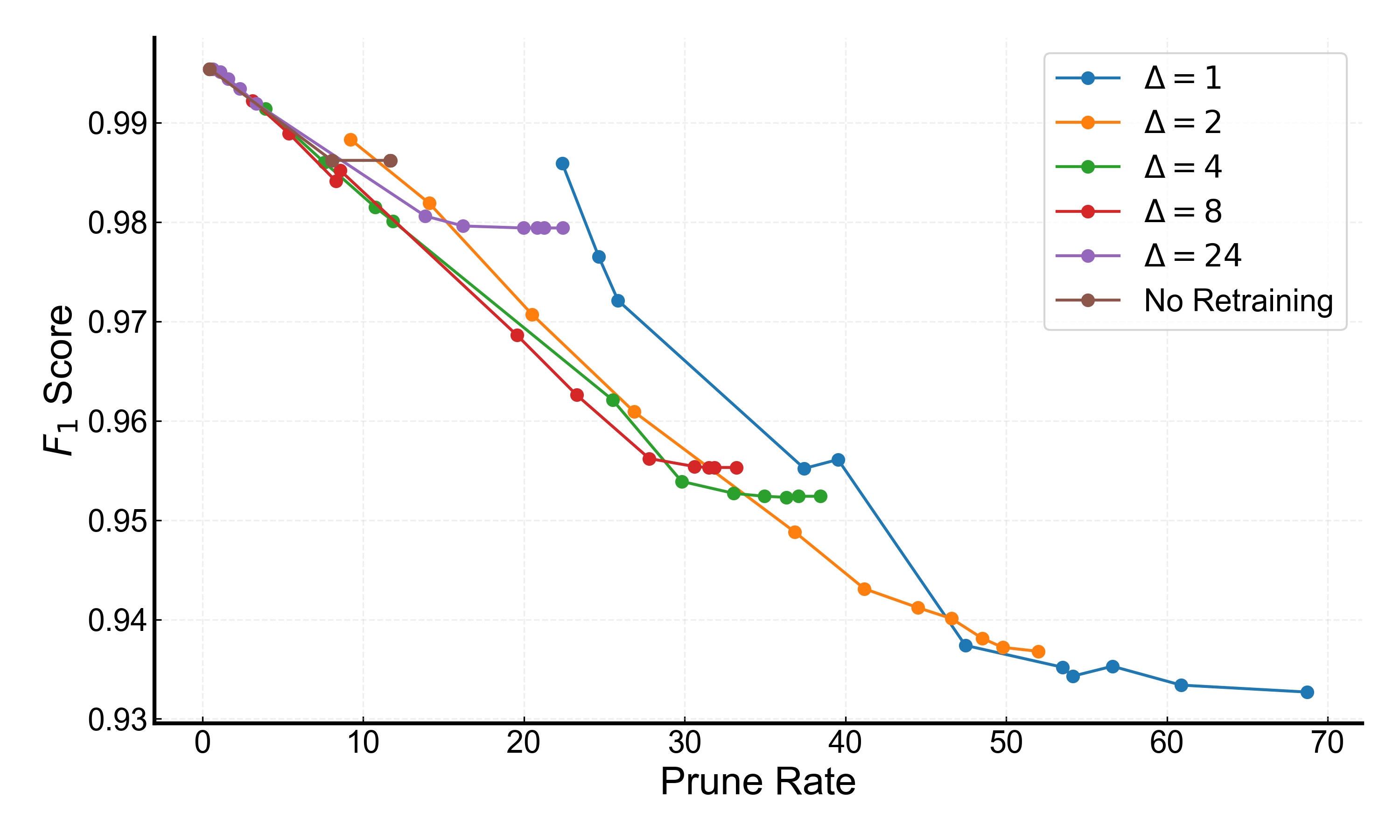}
    \caption{Evaluating the effect of different re-training frequencies ($\Delta$). The rightmost point on each curve represents the NP baseline.}
    \label{fig:data_quality_ratio_default_ECPD}
\end{figure}

\subsection{Importance of selective pruning via decision rules}
\label{sec:pruning_decision_rules}

We next ablate the effectiveness of our strategy via the recalibration frequency and by relaxing the pruning criteria. 
Figure \ref{fig:data_quality_ratio_default_ECPD} visualizes the tradeoff between the prune rate and the $F_1$ score at different $\Delta$ frequencies of recalibration. 
More frequent recalibration allows the observed dataset $\set{D}$ to be up-to-date with information about annotators and crops that have just recently entered the work pool and queue, respectively. 
Recalibration every 1 to 2 hours achieves Pareto-optimal performance while less frequent recalibration results in substantially lower prune rates for the same level of dataset quality. For example, recalibrating every 24 hours can at best yield a prune rate of 22\% at $F_1$ score of 0.979, which is lower than the conservative pruning strategy ($\theta = 0.1$) for $\Delta = 1$. Moreover, the remaining 78\% of the annotation repeats do not qualify for the first two decision rules. We similarly observe that recalibrating at $\Delta = 4$ and $\Delta = 8$ can prune at best 37\% and 33\% of the annotation repeats, respectively.

We next directly ablate the effectiveness of the decision rules against a baseline strategy (AW) which relaxes the first decision rule of pruning only workers that have been seen in the observational dataset. Here, the regression model cannot factor the new worker's latent skill effect. Figure \ref{fig:data_quality_ratio_comparison_ECPD} compares our strategy against AW for $\Delta = 2, 4, 8$. In each case, the ablation strategy results in a noticeable drop in downstream $F_1$ score for a given prune rate. 

\begin{figure}[t!]
    \centering
    \includegraphics[width=0.7\textwidth]{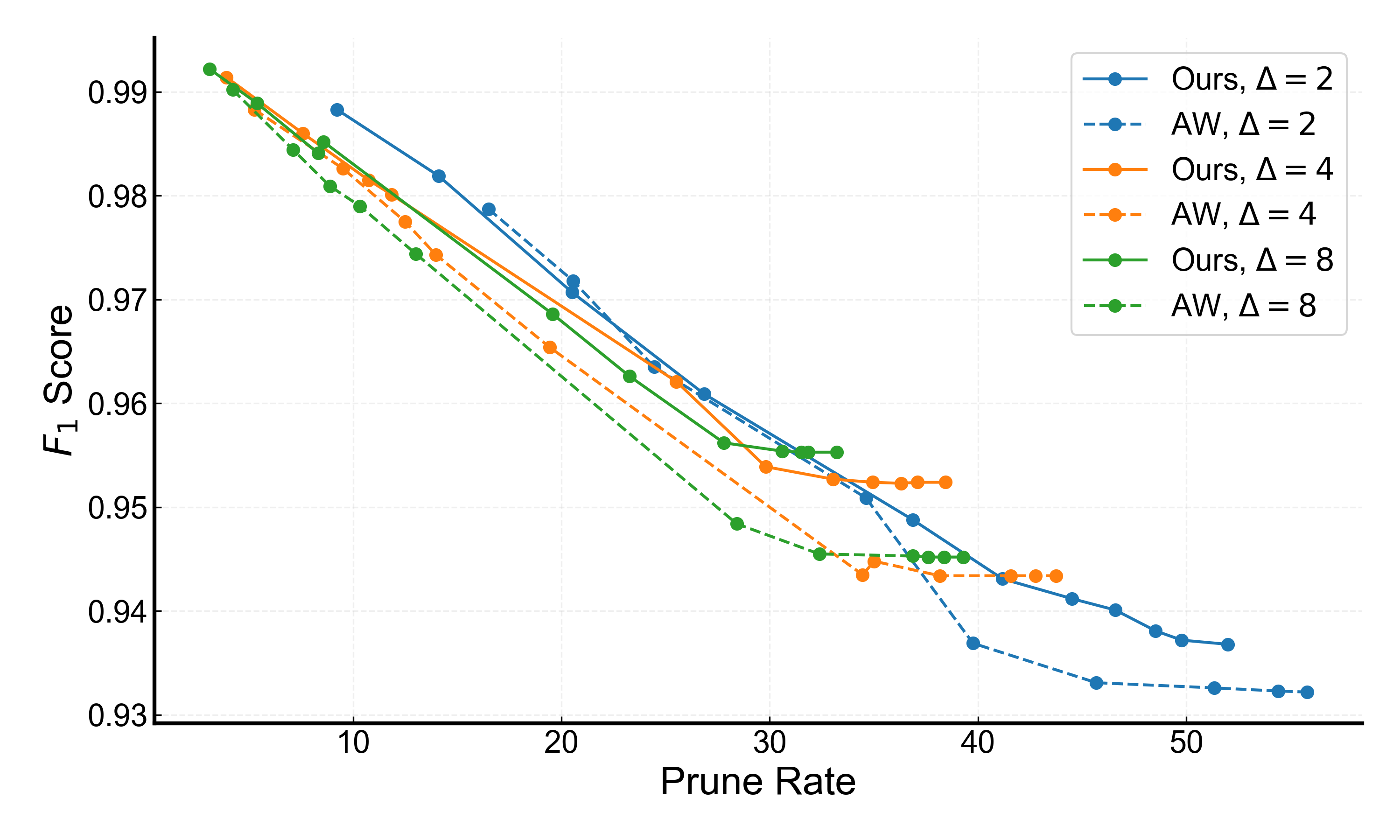}
    \caption{Ablating the loss in dataset quality when pruning via the AW strategy, which relaxes the first decision criteria.}
    \label{fig:data_quality_ratio_comparison_ECPD}
\end{figure}

\subsection{Surprising effectiveness of always pruning}
\label{sec:pruning_always}

Finally, we benchmark our strategy against a simple model-free baseline, NP, that consistently prunes all observed repeat tasks that pass the first two decision rules. This is equivalent to setting $\theta = 0$ and can be observed by the rightmost point for each curve in Figure \ref{fig:data_quality_ratio_default_ECPD}. 
This policy uses a single parameter $\Delta$ to control how often to recalibrate the memory of observed tasks, making this a simple decision rule-based framework to implement.
Table \ref{tab:main_results} highlights the performance of this approach for different values of $\Delta$. For any fixed level of retraining, NP is competitive with the most aggressive pruning setting of our predictive approach. Moreover, this policy appears on the Pareto frontier when considered against slower recalibration frequencies, i.e., $\Delta > 2$. 
Unfortunately, NP does not provide significant user control in balancing the cost-dataset quality tradeoff, as it can only achieve certain discrete points on the curve. 
Notably for downstream applications that require high levels of dataset quality, our predictive strategy with regular recalibration (i.e., $\Delta = 1$) consistently dominates the simple baseline.
We conclude that while the current annotation practice enforces a high degree of redundancy that can be reduced with simple heuristics, pruning by estimating the likelihood of a minority report can yield a noticeable improvement and permit more precise positioning within the cost-quality tradeoff.

\section{The theoretic cost-accuracy tradeoff}

In this section, we theoretically analyze a stylized model of repeated annotation tasks to better understand the observed tradeoff between pruning costs and the resulting dataset accuracy. 
We define the overall dataset accuracy as a function of a given classification model and probability of a minority report and prove that removing annotation repeats always increases the risk of reversing the aggregated majority vote. 
This reveals a `no free lunch' principle within pruning for data annotation. 
Moreover, we simulate the accuracy measure to yield rule-of-thumb guidelines for the number of repeats and the type of classifier to use in pruning-based annotator management. We leave all proofs to the Online Appendix \ref{sec:app_proofs}.

Our theoretical framework relies on the following assumptions.
\begin{assumption}\label{ass:theory_assumptions}
    Each annotation task is repeated by $n$ workers who can only make binary yes/no votes, where $n$ is an odd number. The unconditional ex ante probability of a worker disagreeing with the majority vote is $p$.
\end{assumption}
We view $p$ as the average disagreement rate of a dataset, for example, with $p = 0.0417$ for ECPD. 
Our ex ante probability assumption differs slightly from the classical crowd-sourced task aggregation literature, which assumes that each annotation task has an unknown ground truth label and that instances identify the correct label with i.i.d. probability \citep{dawid1979maximum, raykar2010learning, karger2014budget}. 
This difference is due to the fact that the ground truth in our setting (i.e., as prescribed by \QualityMatch) is defined by the counterfactual majority vote of $n$ workers. 
Furthermore, although Assumption \ref{ass:theory_assumptions} enforces a fixed number of repeats, our analysis naturally generalizes to variable repeat counts by conditioning on each potential value of $n$.
\begin{assumption}\label{ass:classifier}
    We have access to a binary classifier to predict a minority report, parametrized by a true positive rate $q_T$ and false positive rate $q_F \leq q_T$.
\end{assumption}
Assumption \ref{ass:classifier} characterizes the classifier by true positive and false positive rates. In practice, these parameters are induced by the threshold $\theta$ of a logistic regression model.

\subsection{The value of pruning and aggregating votes}

For notation, let $\bar{p} := 1 - p$, $\bar{q}_T := 1 - q_T$, and $\bar{q}_F := 1 - q_F$.
The probability of flipping the aggregated majority vote can be computed by counting the number of incorrectly pruned majority votes and the number of true minority votes that remain. 
\begin{theorem}\label{thm:label_quality}
    Suppose Assumptions \ref{ass:theory_assumptions} and \ref{ass:classifier} hold. 
    Without loss of generality, assume errors in tie events.
    Then, the probability of an incorrectly annotated object after pruning is $P_{err} := (S(p, q_T, q_F) + T(p, q_T, q_F)) / C(p)$ where
    \begin{align*}
        S(p, q_T, q_F) & := \sum_{k = \lfloor \frac{n}{2} \rfloor + 1}^{n-1} \sum_{i=0}^{n - k - 1} \sum_{j = 2k - n + i}^{ k } {n \choose k} { n-k \choose i} { k \choose j} \bar{p}^k p^{n-k} q_{T}^i \bar{q}_{T}^{n - k - i} q_{F}^j \bar{q}_{F}^{k - j} \\
        T(p, q_T, q_F) & := \sum_{k = \lfloor \frac{n}{2} \rfloor + 1}^{n} {n \choose k} \bar{p}^k p^{n-k} q_{T}^{n-k} q_F^{k} \\
        C(p) & := \sum_{k = \lfloor \frac{n}{2} \rfloor + 1}^{n} { n \choose k} \bar{p}^{k} p^{n-k}.
    \end{align*}
\end{theorem}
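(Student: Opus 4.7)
The plan is to derive $P_{err}$ by decomposing the flip event along three independent axes: the true vote partition between majority and minority workers, the number of majority votes falsely pruned, and the number of minority votes correctly pruned. The denominator $C(p)$ will emerge naturally from normalizing to the event that the counterfactual (no-pruning) aggregate recovers the correct label.

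First I would condition on the number $K$ of workers casting the majority vote, with $K \sim \mathrm{Binomial}(n, \bar p)$ by Assumption \ref{ass:theory_assumptions}. The counterfactual aggregate selects the ``correct'' label precisely when $K \geq \lfloor n/2 \rfloor + 1$, which occurs with total probability $C(p)$; this gives the denominator in the conditional-probability interpretation of $P_{err}$. Conditional on $K = k$, I invoke Assumption \ref{ass:classifier} to prune each minority vote independently with probability $q_T$ and each majority vote independently with probability $q_F$, so that the counts $I \sim \mathrm{Binomial}(n-k, q_T)$ and $J \sim \mathrm{Binomial}(k, q_F)$ of pruned minority and majority votes are independent. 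After pruning, $k - J$ majority and $(n-k) - I$ minority votes survive.

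The key observation is that the aggregated label becomes incorrect (with the tie-breaking convention stipulated) exactly when the remaining minority count meets or exceeds the remaining majority count, i.e., $J \geq 2k - n + I$. I would split this event into two disjoint sub-cases. In sub-case (a), at least one minority vote survives ($i \leq n - k - 1$), and summing the joint binomial mass over flip-consistent pairs $(i, j)$ with $j \in \{2k - n + i, \ldots, k\}$, weighted by $\binom{n}{k}\bar p^k p^{n-k}$, yields the triple sum $S$. In sub-case (b), all minority votes are pruned ($i = n - k$), so the flip rule $J \geq 2k - n + (n-k) = k$ forces $j = k$; the empty post-pruning outcome being counted as an error contributes the simpler term $T$ with mass $q_T^{n-k}q_F^k$ per $k$. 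Summing across both sub-cases yields the joint probability $S + T$ of an erroneous outcome alongside the correct-majority event, and dividing by $C(p)$ produces $P_{err}$.

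The main obstacle will be bookkeeping around the boundary cases: I must verify that the lower index $2k - n + i$ is nonnegative across the summation range $k \geq \lfloor n/2 \rfloor + 1$ and $i \geq 0$ so that no truncation at zero is needed, and that the empty-outcome case is properly absorbed into $T$ rather than double-counted in $S$. Both reductions are elementary, but the three-index structure of $S$ makes the indicator-consistency check the most error-prone step, and it is where I would spend the bulk of proof-writing effort.
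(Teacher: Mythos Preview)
Your proposal is correct and mirrors the paper's proof essentially step for step: both condition on the majority count $K=k$ with the truncated Binomial weight $\binom{n}{k}\bar p^k p^{n-k}/C(p)$, introduce the two independent Binomials $I\sim\mathrm{Binomial}(n-k,q_T)$ and $J\sim\mathrm{Binomial}(k,q_F)$ for pruned minority and majority votes, derive the flip condition $J\ge 2k-n+I$, and split into the same two disjoint sub-events (at least one minority vote survives, yielding $S$; everything pruned, yielding $T$). The boundary checks you flag (nonnegativity of $2k-n+i$ since $2k-n\ge 1$ for odd $n$ and $k\ge\lfloor n/2\rfloor+1$, and the empty-outcome case landing only in $T$) are exactly the bookkeeping the paper handles, so there is nothing substantively new to add.
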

From Theorem \ref{thm:label_quality}, we can estimate the accuracy of the annotated dataset after pruning as $1 - P_{err}$. 
We may bound how many tasks to remove before the accuracy degrades below an acceptable level.

Figure \ref{fig:accuracy_vs_p} plots the dataset accuracy as a function of $p$ under several different classifiers. We note that aggregating a large number of annotations and pruning the ones estimated to be redundant is typically more accurate than simply aggregating a fewer number of annotations. 
Specifically, a classifier with $(q_T = 0.5, q_F = 0.5)$ (i.e., red curve) is equivalent to aggregating only a random 50\% of the annotations. For example with $n = 11$, this curve approximates the counterfactual where we only obtain on average 5.5 annotations per task. This red curve is almost always worse than the other annotation curves. 
However for sufficiently small $p$ and high $n$, pruning annotations with a poor classifier is less effective than simply performing fewer annotations.

While accuracy decreases with increasing $p$, the rate of decrease varies significantly depending on $q_T$ and $q_F$. Even if $p=0$, aggressive classifiers with high $q_F$ can ruin the dataset by incorrectly predicting assignments as minority reports. In practice and in Section \ref{sec:empirical}, we mitigate this issue via a decision rule that each annotation task must have at least one completed task assignment. 
Furthermore, aggressive models with higher $q_T$ and $q_F$ are more effective when $p$ is sufficiently large. For instance when $n = 5$ and $p > 0.3$, the model with $(q_T = 0.9, q_F = 0.5)$ is better than the model with $(q_T = 0.5, q_F = 0.3)$.

\begin{figure}[t!]
    \centering
    \includegraphics[width=0.32\textwidth]{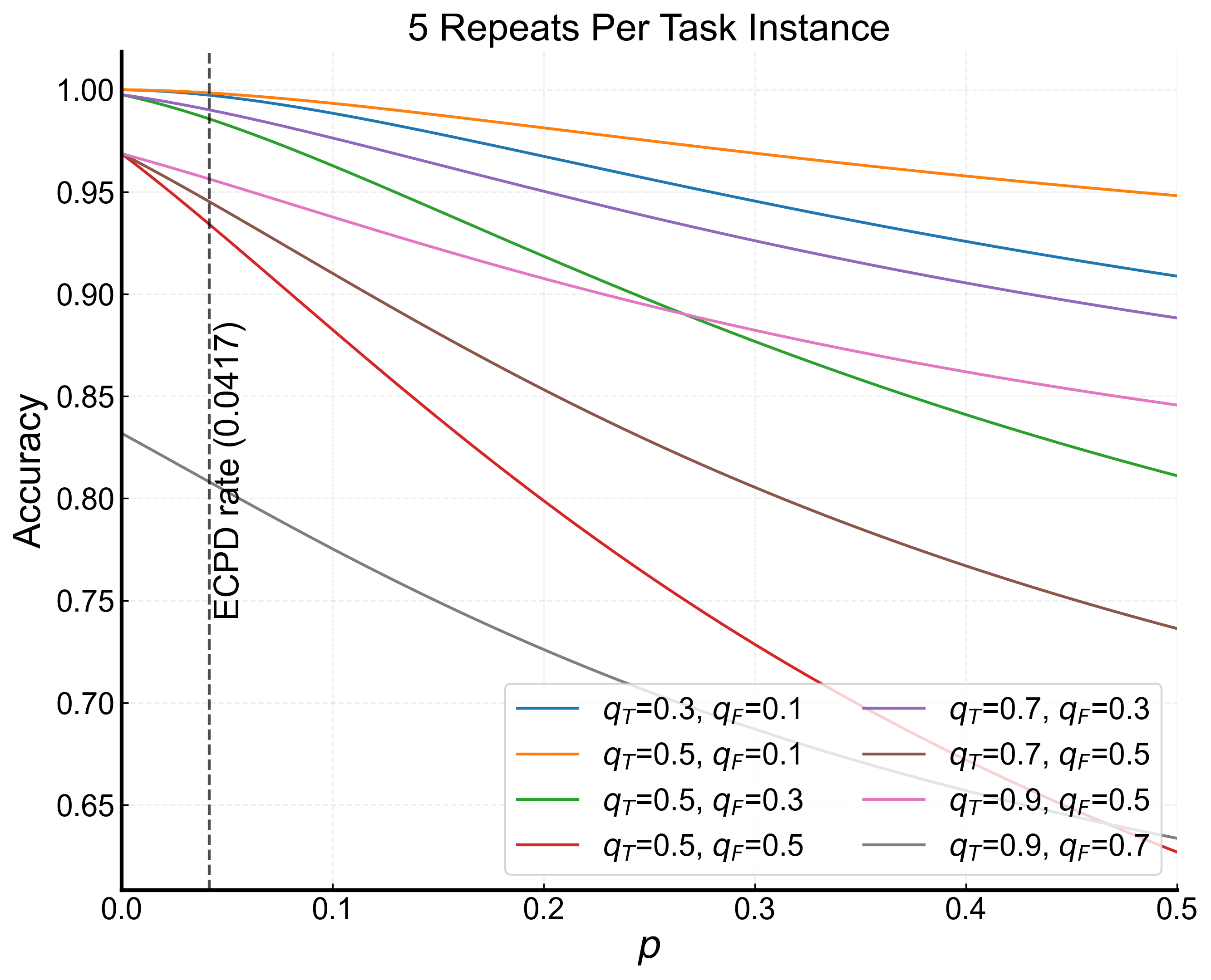}
    \includegraphics[width=0.32\textwidth]{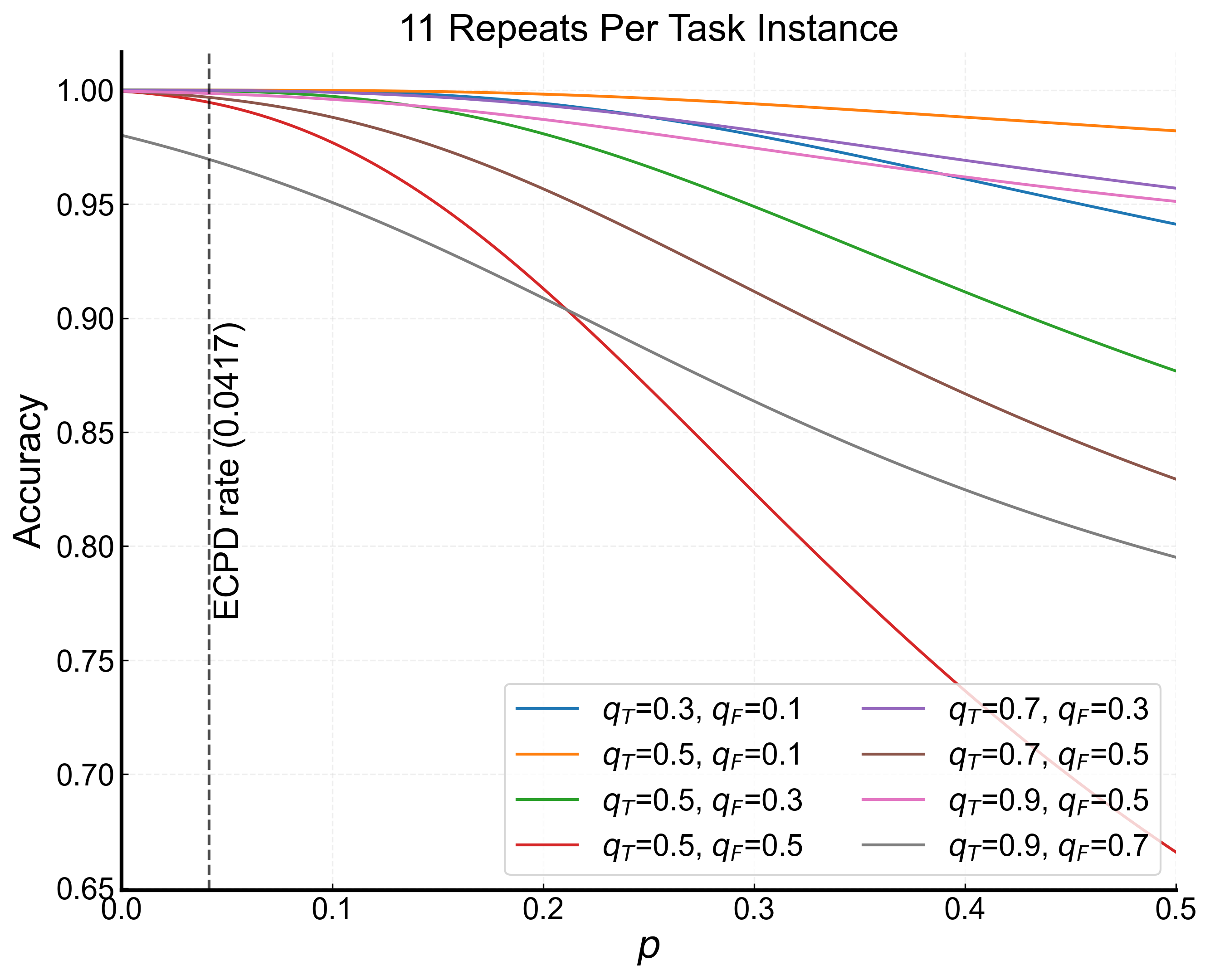}
    \includegraphics[width=0.32\textwidth]{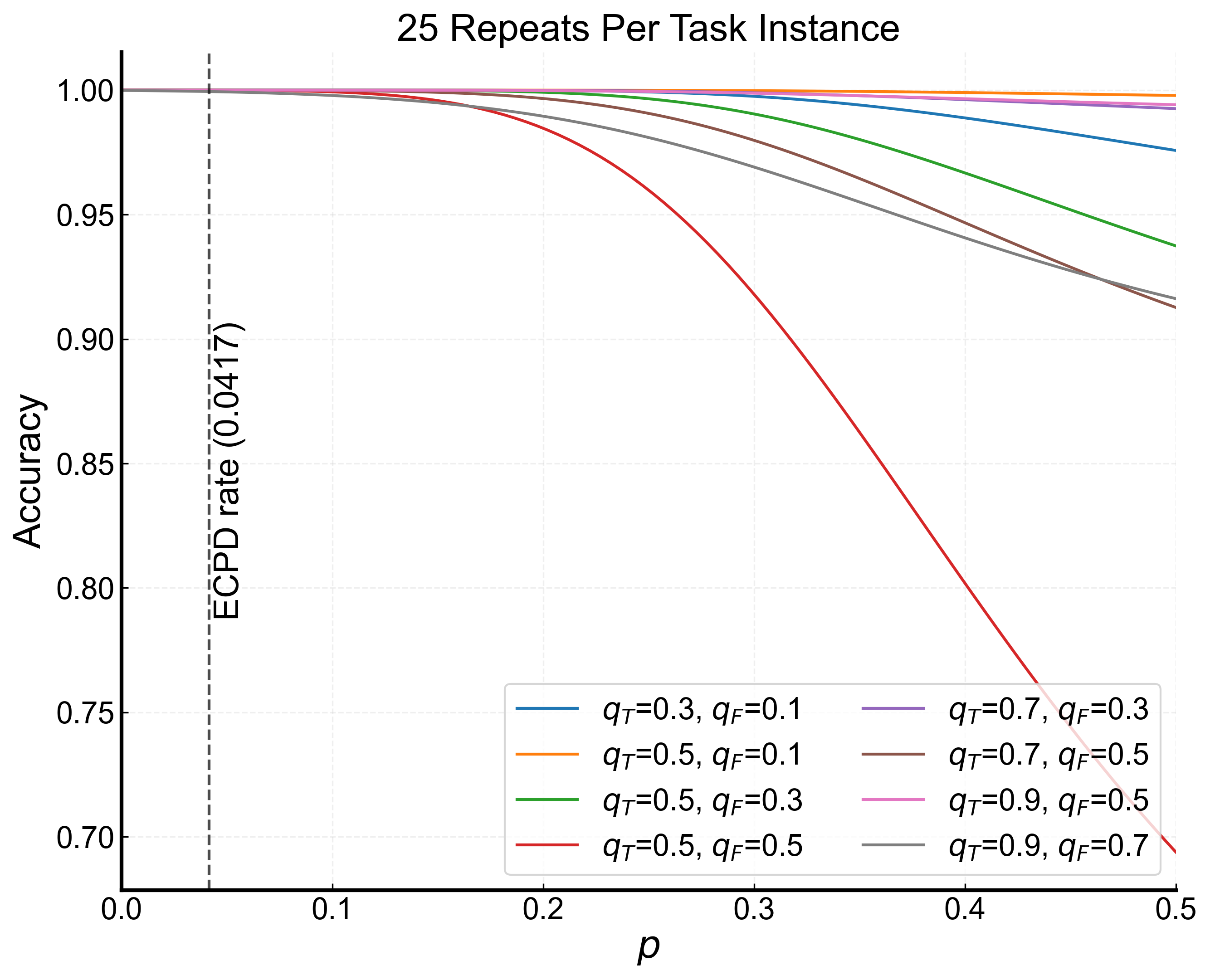}
    \caption{
        Dataset annotation accuracy as a function of the individual annotator minority report rate $p$ when annotation repeats are pruned with different ML models defined by $(q_T, q_F)$. The red curve represents $(q_T = 0.5 , q_F = 0.5)$, i.e., a random classifier that prunes $50\%$ of the annotations. The plots consider $n =5, 11, 25$ repeats.
    }
    \label{fig:accuracy_vs_p}
\end{figure}

Finally, we observe the impact of the number of repeats needed for a high-quality dataset. Specifically at the ECPD disagreement rate, with $n = 11$ or with $n = 5$ and a classifier with sufficiently low $q_F$, the overall dataset accuracy is consistently above $95\%$; this reflects our empirical observations in Section \ref{sec:empirical}. 
This analysis further allows us to obtain rule-of-thumb estimates for how many repeats are recommended based on the average rate of annotator disagreement. For instance, we observe that our framework can consistently ensure above $95\%$ accuracy even for all $p < 0.1$.

\subsection{The cost-accuracy tradeoff}

To analyze the relationship between the amount of data pruned and the overall dataset quality, we first confirm that holding the classifier fixed, a larger $p$ decreases the expected accuracy of the pruned dataset. 
\begin{theorem}\label{thm:no_free_lunch}
If Assumptions \ref{ass:theory_assumptions} and \ref{ass:classifier} hold, $P_{err}$ monotonically increases with $p$.
\end{theorem}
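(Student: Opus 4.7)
The plan is to rewrite $P_{err}$ as a conditional expectation and apply a monotone likelihood ratio argument. Let $K \sim \text{Binomial}(n, \bar p)$ denote the number of the $n$ voters who agree with the counterfactual majority, and let $f(k) := \Pr(\text{pruned majority is wrong} \mid K = k)$. Inspecting the definitions of $S$, $T$, and $C$ in Theorem \ref{thm:label_quality}, one verifies the identities $S + T = \sum_{k > n/2} \binom{n}{k} \bar p^k p^{n-k} f(k)$ and $C = \Pr(K > n/2)$, so that $P_{err} = E[f(K) \mid K > n/2]$. It then suffices to show (i) $f$ is non-increasing in $k$, and (ii) the conditional law of $K$ given $K > n/2$ is stochastically non-increasing in $p$.

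For (i), I would use a one-voter coupling that compares scenarios with $k$ versus $k+1$ correct votes: hold $n-1$ of the voters fixed (with the same joint pruning distribution in both scenarios) and let a single ``special'' voter switch from wrong (surviving pruning with probability $\bar q_T$) to correct (surviving with probability $\bar q_F$). Writing $A_-$ and $B_-$ for the surviving counts of correct and wrong voters among the fixed $n-1$, and decomposing the error event ($B \geq A$, with ties counted as errors) on the survival status of the special voter, a short calculation gives $f(k) - f(k+1) = \bar q_T \Pr(B_- = A_- - 1) + \bar q_F \Pr(B_- = A_-) \geq 0$. For (ii), the family $\{\text{Binomial}(n, \bar p)\}$ has the standard MLR property in $k$: for $k_2 > k_1$, the ratio $\Pr(K = k_2)/\Pr(K = k_1) \propto (\bar p / p)^{k_2 - k_1}$ is decreasing in $p$, and MLR is preserved under truncation to $\{k > n/2\}$. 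Combining (i) and (ii) by the usual stochastic-dominance argument then yields $E[f(K) \mid K > n/2]$ non-decreasing in $p$, which is the theorem.

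The main obstacle is making step (i) rigorous with the tie-as-error convention: the error event's dependence on the full survival pattern must be decomposed so that adding one correct voter (equivalently, removing one wrong voter) weakly decreases the error probability, including at the boundary cases where all correct or all wrong voters get pruned. Verifying the identity $S + T = \sum_{k > n/2} \binom{n}{k} \bar p^k p^{n-k} f(k)$ from the closed-form triple sums in Theorem \ref{thm:label_quality} is a routine but careful bookkeeping step, requiring the index ranges in $S$ (the $i \leq n-k-1$ constraint capturing at least one surviving minority voter and the $j \geq 2k - n + i$ constraint capturing the majority-flip condition) to match the all-pruned tie case isolated in $T$.
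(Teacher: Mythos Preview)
Your proposal is correct and rests on the same core idea as the paper's proof: both arguments hinge on showing that the conditional error probability given $K=k$ majority votes is non-increasing in $k$, and then combining this with the fact that the truncated Binomial law of $K$ shifts toward smaller values as $p$ grows. The paper isolates this monotonicity as a separate helper lemma (proving $\beta_k$ and $\gamma_k$ non-increasing via a coupling essentially identical to your one-voter coupling, plus a direct ratio check for $\gamma_k$), so your step~(i) and the paper's lemma are the same result with nearly the same proof. The difference lies in the second step: the paper differentiates $S/C$ and $T/C$ explicitly using the quotient rule and then applies a symmetrization of the resulting double sum in $(k,m)$ to exhibit each term as non-negative, whereas you invoke the MLR property of the Binomial family (preserved under truncation) to obtain stochastic monotonicity and conclude directly. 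Your route is cleaner and more conceptual---the paper's symmetrization is effectively a bare-hands reproof of the standard ``MLR implies monotone conditional expectations'' fact you cite. Treating $f=\beta_k+\gamma_k$ in one shot, rather than splitting into $S/C$ and $T/C$, is a further small simplification over the paper.
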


While we have explored the relationship between worker behavior and the accuracy of the downstream dataset, we next connect this relationship to the prune rate, i.e., the fraction of repeats being pruned from the dataset. Given $p$, as well as a true positive rate $q_T$ and false positive rate $q_F$, the prune rate is $r := p q_T + \bar{p} q_F$. 
From Assumption \ref{ass:classifier}, $q_T > q_F$, which means that $r$ is a non-decreasing function of $p$ for a fixed model. Consequently, $P_{err}$ must also increase with $r$. 
\begin{corollary}\label{cor:no_free_lunch_prune_rate}
    Suppose Assumptions \ref{ass:theory_assumptions} and \ref{ass:classifier} hold. For a fixed classifier, $P_{err}$ monotonically increases with $r$.
\end{corollary}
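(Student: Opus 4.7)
The plan is to reduce the corollary to Theorem \ref{thm:no_free_lunch} by observing that for a fixed classifier, the prune rate $r$ is itself a monotonic function of $p$. Specifically, I would begin by expanding the definition
\[
r = p q_T + \bar{p} q_F = q_F + p \, (q_T - q_F),
\]
so that with $q_T$ and $q_F$ held constant, $r$ is an affine function of $p$ whose slope equals $q_T - q_F \geq 0$ by Assumption \ref{ass:classifier}.

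Next, I would split into the two natural cases. In the generic case $q_T > q_F$, the map $p \mapsto r(p)$ is strictly increasing, hence invertible on $[0,1]$, and I can write $p = (r - q_F)/(q_T - q_F)$ as a strictly increasing function of $r$. Composing with Theorem \ref{thm:no_free_lunch}, which guarantees that $P_{err}$ is non-decreasing in $p$ for any fixed classifier, yields that $P_{err}$ is non-decreasing in $r$, as claimed. In the degenerate case $q_T = q_F$, the classifier is independent of the true label, so $r$ reduces to the constant $q_F$ and the monotonicity statement holds vacuously since $r$ does not vary over the parameter family.

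I do not anticipate any real obstacle, since the content of the corollary is essentially just a change of variables from $p$ to $r$. The only subtlety worth flagging is that the monotonicity is expressed as $p$ varies across datasets for a \emph{fixed} classifier; if instead one varied $(q_T, q_F)$ to trace out the same $r$ in different ways, the monotonicity could fail and $P_{err}$ would depend on which point of the ROC curve one chose. I would include a brief remark to this effect so the reader does not over-interpret the corollary as a universal statement about cost versus accuracy across classifier choices, but rather as the within-classifier Pareto behavior that matches the empirical curves in Section \ref{sec:pruning_tradeoffs}.
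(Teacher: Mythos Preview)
Your proposal is correct and follows essentially the same approach as the paper: both reduce the corollary to Theorem~\ref{thm:no_free_lunch} by observing that, for a fixed classifier, $r$ is a monotone (affine) function of $p$ and then composing. Your treatment is slightly cleaner in that it argues directly via composition of monotone maps rather than the chain rule and inverse function theorem, and it explicitly handles the degenerate case $q_T = q_F$ that the paper's invertibility claim glosses over.
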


Theorem \ref{thm:no_free_lunch} and Corollary \ref{cor:no_free_lunch_prune_rate} imply a `no free lunch' property where pruning annotations will always reduce dataset accuracy at the benefit of reducing the annotation cost, measured by number of annotations. 
Intuitively, none of the repeats are `strictly redundant' ex ante and have some probability of altering the aggregated label. 
If we accept a modest 5\% error rate, then  we can prune a large proportion of the task assignments. Example \ref{ex:logistic_prune_rate} below showcases this property.

\begin{example}\label{ex:logistic_prune_rate}
    Consider a logistic regression classifier defined by model parameters $\blambda$ and threshold $\theta$ where $1/(1 + \exp(-\bx_{i,j,k}^\tpose \blambda)) > \theta$ implies that the task assignment $(i,j,k)$ is predicted to be a minority report. 
    Suppose $\bx_{i,j,k}^\tpose \blambda | y_{i,j,k} = 1 \sim \set{N}(\mu_1, \sigma^2)$ and $\bx_{i,j,k}^\tpose \blambda | y_{i,j,k} = 0 \sim \set{N}(\mu_0, \sigma^2)$ where $\mu_1 > \mu_0$ and $\sigma > 0$ are Gaussian parameters. 
    Then, the true positive rate and false positive rate are
    \begin{align*}
        q_T(\theta) & = \Pr \left\{ \bx_{i,j,k}^\tpose \blambda > \logit(\theta) \;\Big|\; y_{i,j,k} = 1 \right\} = 1 - F_1 ( \logit(\theta) ) \\
        q_F(\theta) & = \Pr \left\{ \bx_{i,j,k}^\tpose \blambda > \logit(\theta) \;\Big|\; y_{i,j,k} = 0 \right\} = 1 - F_0 ( \logit(\theta) )
    \end{align*}
    where $F_1(\cdot)$ and $F_0(\cdot)$ are the CDF of $\set{N}(\mu_1, \sigma)$ and $\set{N}(\mu_0, \sigma)$, respectively. 
    Furthermore, $r(p, \theta) = p ( 1 - F_1 ( \logit(\theta) ) ) + \bar{p} ( 1 - F_0 ( \logit(\theta) )$. 

    Figure \ref{fig:accuracy_vs_r} plots the accuracy as a function of $r(p, \theta)$ for different values of $p$, after setting $\mu_1 = 0.5, \mu_0 = -0.5, \sigma = 1$. For this classifier, the AUC $=0.760$. 
    Note that this choice of parameters implies significant overlap between the two covariate distributions.
    Nonetheless, the plots reveal significant redundancy where even for $n = 5$, up to $r = 50\%$ of the repeats can be pruned while preserving approximately $95\%$ accuracy. For $n = 11$ and $n = 25$, we can prune over $70\%$ and $85\%$ of the annotation repeats, respectively, at the same level of accuracy. These plots reflect the observations from our empirical analysis where we were able to prune $68.7$\% of the repeats while preserving over $95\%$ accuracy in the final dataset.
\end{example}

\begin{figure}[t!]
    \centering
    \includegraphics[width=0.32\textwidth ]{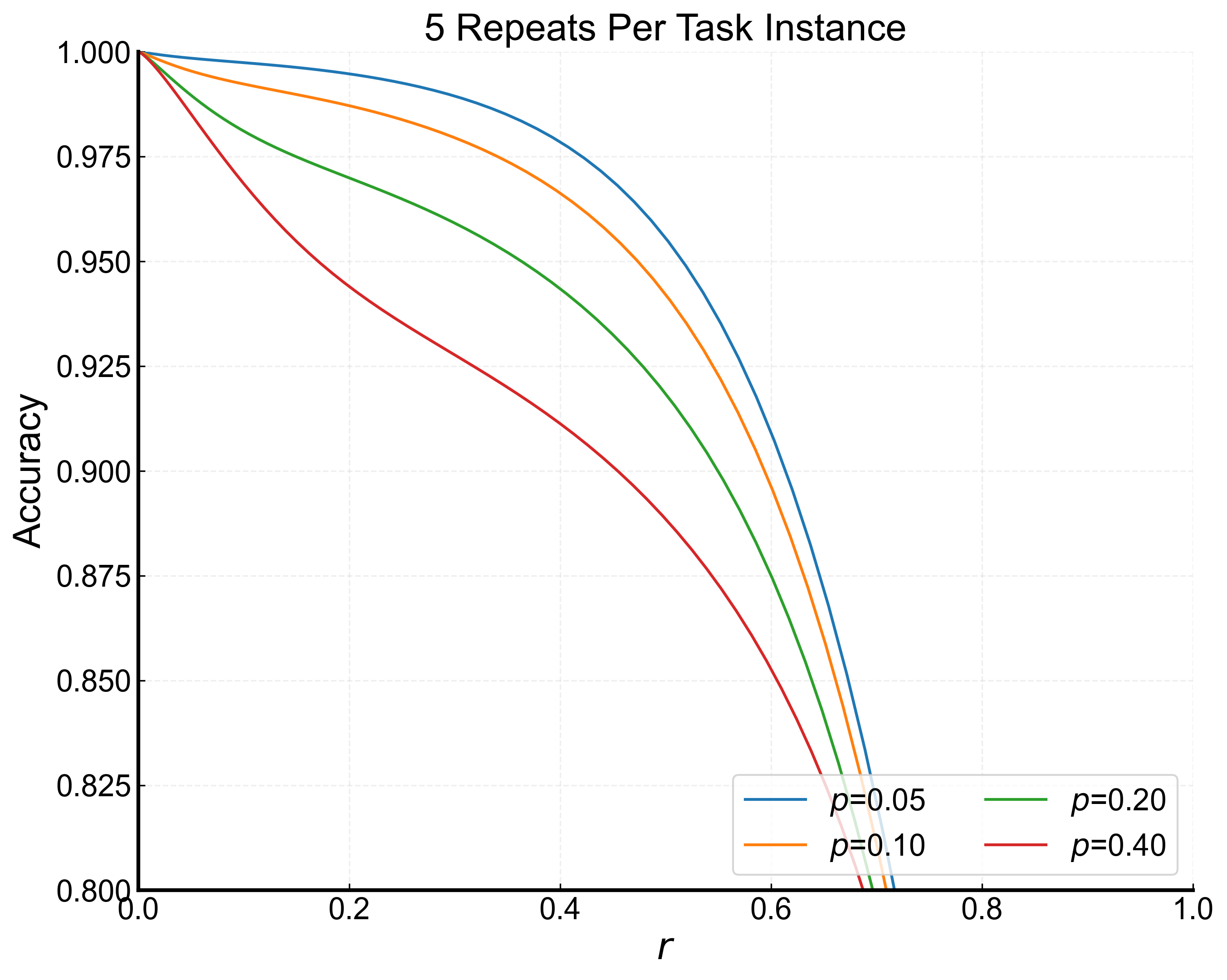}
    \includegraphics[width=0.32\textwidth]{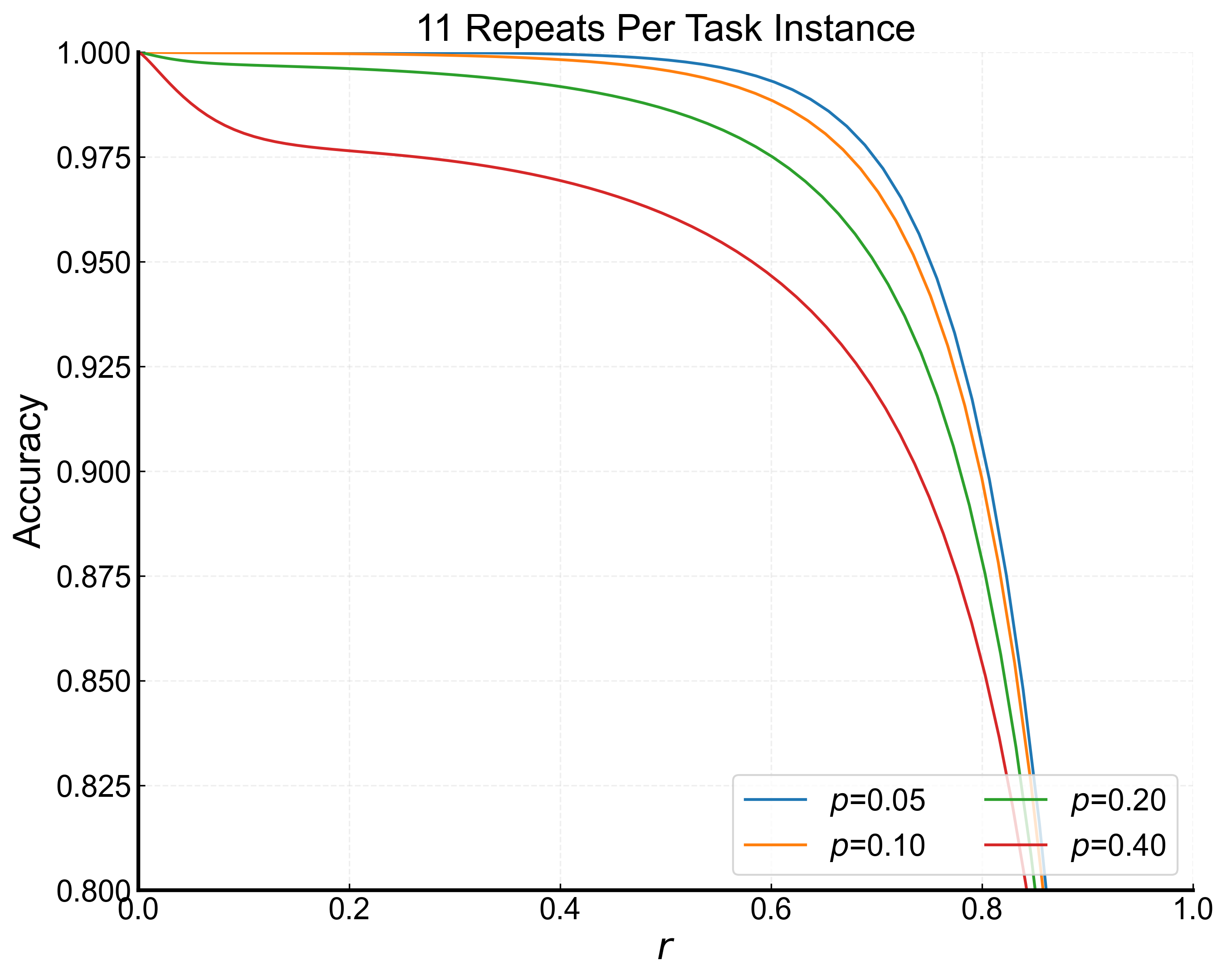}
    \includegraphics[width=0.32\textwidth]{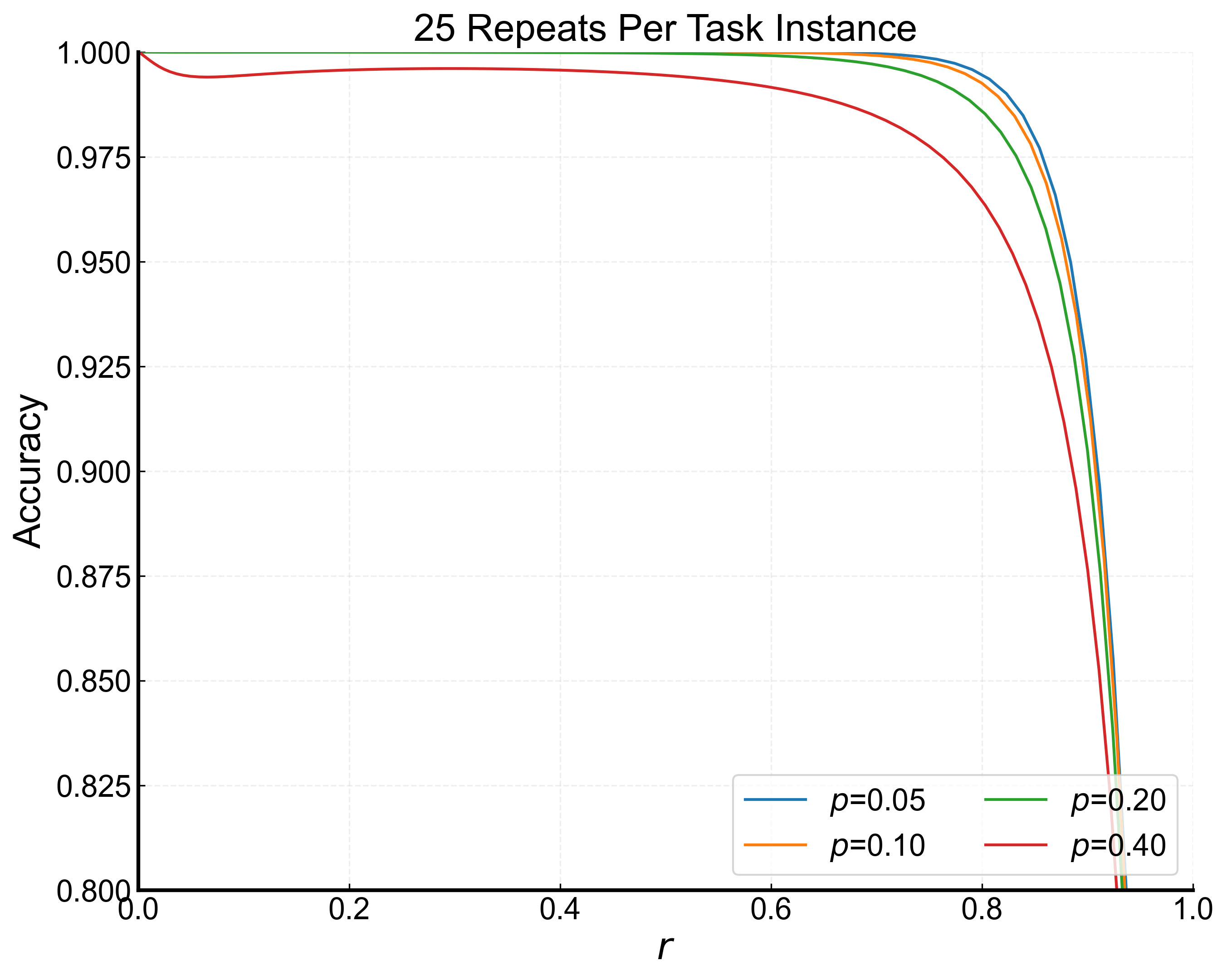}
    \caption{
        From Example \ref{ex:logistic_prune_rate}, 
        annotation accuracy as a function of the prune rate $r$.; 
        We set $\bx_{i,j,k}^\tpose \blambda | y_{i,j,k} = 1 \sim \set{N}(0.5, 1)$ and $\bx_{i,j,k}^\tpose \blambda | y_{i,j,k} = 0 \sim \set{N}(-0.5, 1)$, meaning the AUC $=0.760$. The three plots consider settings where each task instance has 5, 11, and 25 repeats, respectively. 
    }
    \label{fig:accuracy_vs_r}
\end{figure}

\section{Conclusion and discussion}

In this paper, we investigate how existing data annotation pipelines for ML systems can be further optimized to reduce labor costs while preserving data quality. We introduce the notion of minority reports, referring to annotator responses that deviate from the final crowd consensus label. By predicting the likelihood of an assigned annotation task being a minority report using worker- and task-specific covariates, our framework effectively prunes costly task assignments before they occur. We validate this approach through extensive empirical analysis in collaboration with \QualityMatch, a high-quality data annotation firm, on two object detection datasets that were annotated under the firm's operational procedures. Our findings show that pruning can reduce the annotation volume by over 60\% at only a small sacrifice in dataset accuracy that is consistent with the standard error levels in most benchmark datasets \citep{northcutt2021pervasive}.

The data annotation pipeline is a complex process involving collecting data, proposing a labeling ontology, crafting annotation tasks, assigning these tasks to a worker pool, and then aggregating their responses. 
Our pruning treatment easily fits into existing data annotation pipelines, since pruning operates in the layer after tasks are assigned but before they are executed.
Below, we highlight several insights for data annotation platforms and suggestions for future work.

\textbf{Balancing cost versus quality for downstream tasks.}
Our method identifies and dynamically removes task assignments for which annotators are likely to produce erroneous response, thereby reducing labeling costs for a given acceptable data quality level.
Safety-critical ML applications such as in medical imaging or autonomous vehicles can adopt conservative thresholds, while applications with higher degrees of acceptable ML prediction error can aggressively prune annotation tasks to reduce costs. 
Moreover, we navigate this tradeoff via a single thresholding parameter.
Finally, while we focus on computer vision tasks, the paradigm of pruning minority reports easily applies to any ML setting where data is annotated via a majority vote to determine a `ground truth' answer.

\textbf{Rule-of-thumb estimates on how much to annotate and prune.} 
Our analysis yields easily calculable estimates of dataset quality and savings from pruning as a function of the annotator disagreement rate and summary metrics of a classifier. 
Consequently, practitioners can easily determine appropriate pruning decision rules to achieve the desired cost-quality tradeoff without further costly experimentation in parameter tuning.
In particular, we show that assigning a larger number of repeats and pruning potential minority reports will typically produce higher quality data than simply assigning fewer annotations.

\textbf{Annotator behavior and the potential for worker-personalized interventions.}
Our empirical analysis reveals that the combination of crop ambiguity or difficulty, annotator skill, and annotator fatigue can accurately estimate the likelihood of an annotator producing a minority report. Although not implemented, we believe our model can drive additional operational treatments to better manage the annotator pool. For instance, annotation platforms may use the annotator effect variables to identify potential low-skilled workers and provide personalized additional calibration and retraining sessions. Platforms may also employ fatigue-aware task scheduling, for instance with scheduled breaks to mitigate these fatigue-related errors. This may also promote better working conditions and improve the overall well-being of annotators, contributing to sustainable and responsible labeling practices.

We envision expanding this framework along several directions. Firstly, it is increasingly common to merge human annotation with AI support, which can take the form of identifying high-value points that require accurate labels or by providing initial labels that human annotators can just verify rather than label themselves. 
Trends among worker skills and fatigue can differ in this setting as the task of verifying a label is fundamentally different from the task of creating a label.
Furthermore, the emerging class of large language models warrant different annotation paradigms such as in reinforcement learning with human feedback (RLHF), where human annotators must annotate the outputs of models according to ranked preferences. The high variability in human preference prevents clear majority votes or even permits multiple different notions of a ground truth label \citep{liu2024reward}. 
Moreover, annotation in RLHF is typically more time-consuming and considered expensive human labor, as it may require specialized expertise \citep{liu2025humans}.
Nonetheless, we believe that analytics-driven task assignment and pruning rules can benefit ML teams seeking to minimize annotation costs without compromising data integrity.




%
%
%




\bibliographystyle{informs2014} 
\bibliography{main} 

\newpage
\ECSwitch
\ECHead{Online Appendix for Minority Reports: Balancing Cost and Quality in Ground Truth Data Annotation}


\section{Robustness to continuous activity}
\label{sec:app_robustness_to_ct}

\begin{table}[t!]
\centering
\small
\begin{tabular}{lcccc}
\toprule

Parameter & Default & 1 minute & 30 minutes & 60 minutes \\

\midrule

Continuous activity$^2$ & $1.428^{***}$ & $1.345^{***}$ & $1.071^{***}$ & $1.016^{***}$ \\
 & $($0.007$)$ & $($0.024$)$ & $($0.002$)$ & $($0.001$)$ \\

Continuous activity & $0.399^{***}$ & $0.519^{***}$ & $0.799^{***}$ & $0.934^{***}$ \\
 & $($0.020$)$ & $($0.037$)$ & $($0.011$)$ & $($0.008$)$ \\

\midrule

Observations                            & 1,035,491       & 1,035,491       & 1,035,491             & 1,035,491 \\
Pseudo-$R^2$                 & 0.0968          & 0.0918           & 0.0947                & 0.0915 \\ 

\bottomrule
\end{tabular}
\caption{Comparison of odds ratios for Model A with different definitions of `Continuous activity' on ECPD. The default definition is 10 minutes. Standard errors for the log-odds are show in parentheses. For random effects, we report the standard deviation (SD) over log odds-ratios. Significance levels: *** $p<0.001$, ** $p<0.01$, * $p<0.05$.}
\label{tab:model_ct_robust_ecpd}
\end{table}

In Section \ref{sec:empirical_results}, Model A consisted of a logistic regression model to predict the odds of a minority report given the amount of continous time that the annotator had spent working. Here, continuous activity was defined as the uninterrupted period of task completion, with breaks of at least 10 minutes marking the start of a new activity period. In this section, we explore the robustness of Model A to alternative definitions of continuous activity.

To assess robustness, we re-estimate the mixed-effects logistic regression models using alternative break thresholds of one minute, 30 minutes, and 60 minutes. Table~\ref{tab:model_ct_robust_ecpd} summarizes these results, as well as the default 10 minute break version (Model A from Table~\ref{tab:model_abcd_ecpd}) for reference. Across all model specifications, the coefficients for Continuous Activity and Continuous Activity$^2$ remain statistically significant ($p < 0.001$), indicating robustness to the choice of threshold. The coefficient magnitudes are relatively stable. Most importantly, the squared term consistently has an odds ratio greater than one, confirming the presence of the bathtub fatigue effect. Finally, we remark that the Pseudo-$R^2$ values remain consistent.

\section{Residual diagnostics}
\label{sec:app_residuals}

\begin{figure}[t!]
    \centering
    \begin{subfigure}[b]{0.49\textwidth}
        \centering
        \includegraphics[width=0.99\textwidth]{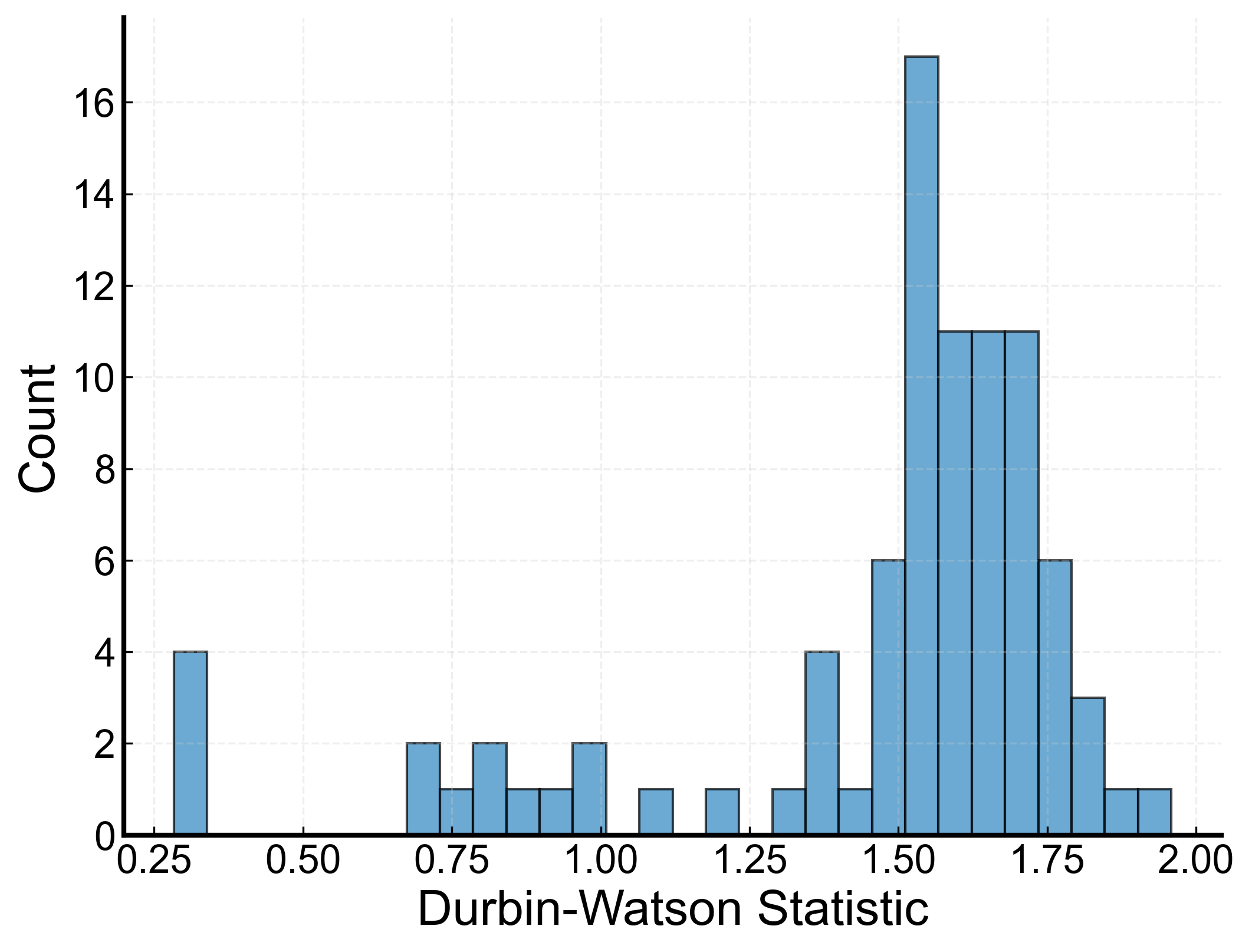}
        \caption{ECPD}
        \label{fig:residuals_ecpd}
    \end{subfigure}
    \hfill
    \begin{subfigure}[b]{0.49\textwidth}
        \centering
        \includegraphics[width=0.99\textwidth]{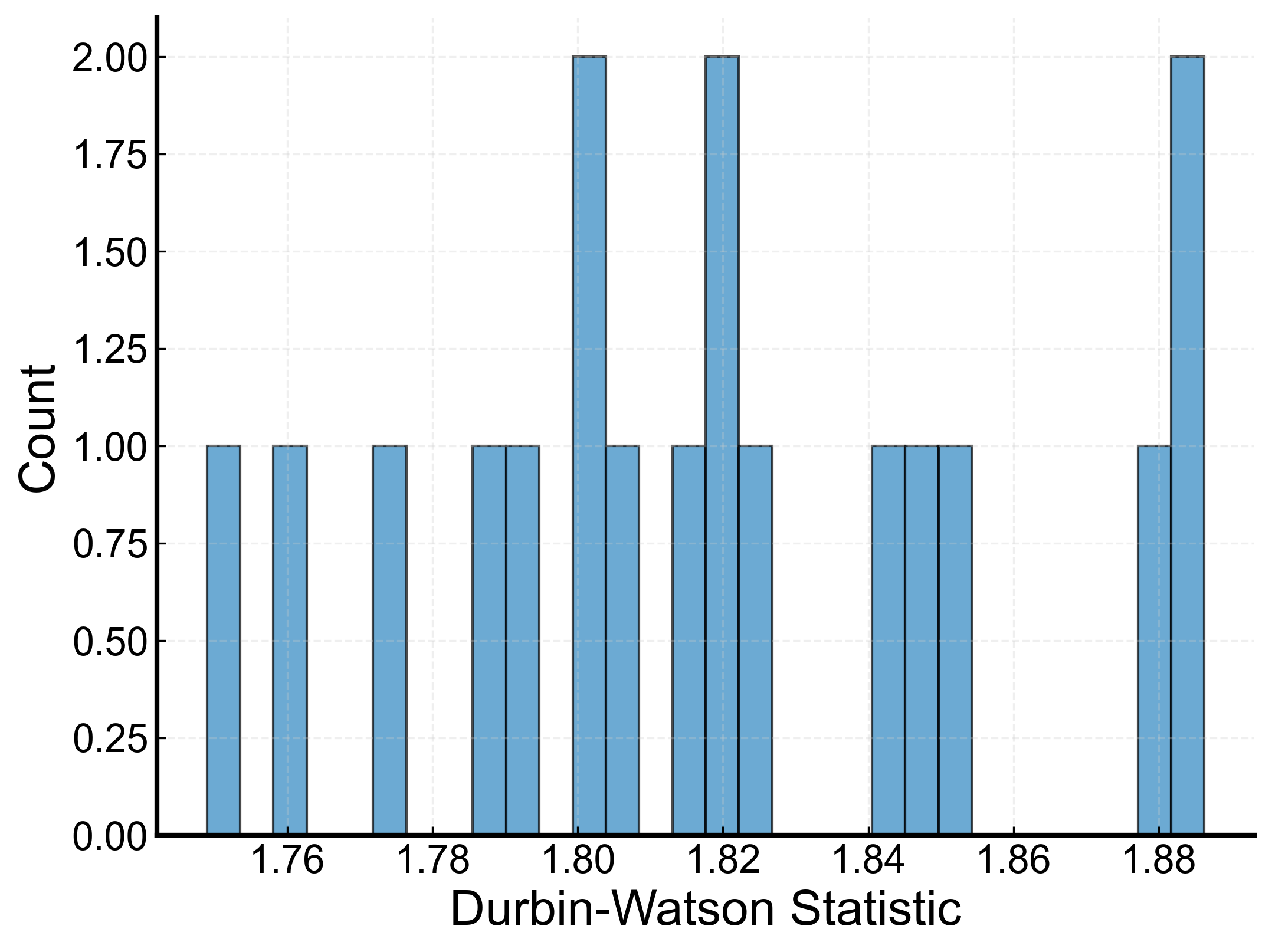}
        \caption{ZOD}
        \label{fig:residuals_zod}
    \end{subfigure}
    \caption{
        For both annotation datasets,
        the histogram of Durbin-Watson statistics for each annotator over the corresponding study period.
    }
    \label{fig:residuals}
\end{figure}

We examine the residual diagnostics associated with the final logistic regression model, i.e., Model A + W + C, developed in Section \ref{sec:empirical_results}. 
To investigate residual variance, we conduct dispersion tests for both datasets \citep{hartig2016dharma}. Results indicate no significant evidence of either over-dispersion or under-dispersion in the residuals ($p > 0.05$), thereby confirming homoscedasticity.

To ensure independence of the residuals with respect to autocorrelation, we compute Durbin-Watson statistics via deviance residuals for each annotator's task throughout the study period. Figure \ref{fig:residuals} plots the histogram of these statistics. For both datasets, most annotators exhibit Durbin-Watson values exceeding 1.5. This indicates some minor positive autocorrelation for annotators.

\section{Proofs}
\label{sec:app_proofs}

\begin{proof}{Proof of Theorem \ref{thm:label_quality}}
For any fixed $n$ repeats, let $K$ be a random variable representing the number of votes constituting the majority. Given that disagreements are i.i.d., $K$ follows a truncated Binomial distribution where 
\begin{align*}
    \Pr\{ K = k \} = \frac{1}{C(p)} { n \choose k } \bar{p}^{n-k} p^k, \quad \forall K \geq \lfloor \frac{n}{2} \rfloor + 1
\end{align*}
Given a classifier, after the repeats that are predicted to be disagreements are removed from the set of votes, the final majority vote will change if and only if the remaining count of disagreements exceeds the remaining count of the prior majority. Let $T_{n-k} \sim \Binomial(n-k, q_T)$ be a random variable representing the number of True Positive (TP) detections conditioned on $K = k$ majority votes. Finally, let $F_k \sim \Binomial(k, q_F)$ be a random variable representing the number of False Positive (FP) detections conditioned on $K = k$ majority votes.

The majority vote may change, under two disjoint events:
\begin{enumerate}
    \item[i)] There are $K = k \leq n - 1$ votes towards the majority (i.e., at least one minority report). Furthermore, there are at most $T_{n-k} \leq n - k - 1$ TPs; and $F_{k} \geq 2k - n + T_{n-k}$ FPs. This ensures that the original majority votes after pruning are counted to $k - F_k \leq n - k - T_{n-k}$ votes for the minority and that at least one vote remains. Figure \ref{fig:proof_theory} visualizes this event. 

    \item[ii)] All repeats are pruned, i.e., $F_{k} = k$ and $T_{n-k} = n - k$.
\end{enumerate}
We will prove that $S(p, q_T, q_F) / C(p)$ is equal to the likelihood of the first event, and $T(p, q_T, q_F) / C(p)$ is equal to the likelihood of the second event.

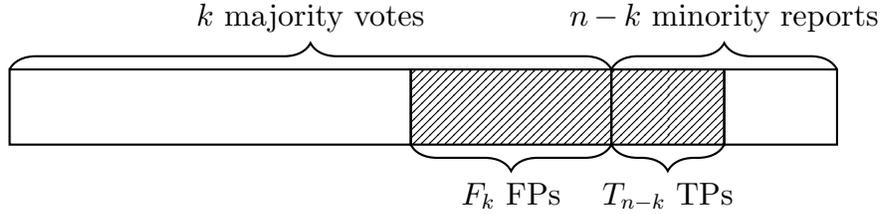
\begin{figure}[t!]
    \centering
    \begin{tikzpicture}
    
    \def\nkwidth{1.5} 
    \def\boxheight{1}
    \def\kwidth{8}  

\draw[thick] (0,0) rectangle (\kwidth * 2/3, \boxheight) node[midway] {};
\draw[thick, pattern=north east lines] (\kwidth * 2/3,0) rectangle (\kwidth, \boxheight) node[midway] {};
\draw[thick, pattern=north east lines] (\kwidth,0) rectangle (\kwidth+\nkwidth, \boxheight) node[midway] {};
\draw[thick] (\kwidth+\nkwidth,0) rectangle (\kwidth+2*\nkwidth, \boxheight) node[midway] {};

\draw[decorate, decoration={brace, amplitude=10pt}, thick] (0, \boxheight) -- (\kwidth, \boxheight) node[midway, above=10pt] {$k$ majority votes};

\draw[decorate, decoration={brace, amplitude=10pt}, thick] (\kwidth, \boxheight) -- (\kwidth+2*\nkwidth, \boxheight) node[midway, above=10pt] {$n-k$ minority reports};

\draw[decorate, decoration={brace, amplitude=10pt, mirror}, thick] (\kwidth * 2/3, 0) -- (\kwidth, 0) node[midway, below=10pt] {$F_{k}$ FPs};
\draw[decorate, decoration={brace, amplitude=10pt, mirror}, thick] (\kwidth, 0) -- (\kwidth+\nkwidth, 0) node[midway, below=10pt] {$T_{n-k}$ TPs};

    \end{tikzpicture}
    
    \caption{Visualizing the false positive (FPs) and true positives (TP) predictions on an annotation task with $n$ repeats, conditioned on $k \leq n - 1$ initial votes for the majority. For the majority vote to change, we must have $T_{n-k} \leq n - k - 1$ and $F_{k} \geq 2k - n + T_{n-k}$.}
    \label{fig:proof_theory}
\end{figure}

The first event can be characterized as
\begin{align*}
    & \sum_{k = \lfloor \frac{n}{2} \rfloor + 1}^{n-1} \Pr\{ K = k \} \Pr \{ T_{n-k} \leq n - k - 1 \text{~and~} F_{k} \geq 2k - n + T_{n-k} \;|\; K = k \} \\
    & \qquad\qquad  = \sum_{k = \lfloor \frac{n}{2} \rfloor + 1}^{n-1} \Pr\{ K = k \} \sum_{i=1}^{n-k-1} \Pr \{ T_{n-k} = i | K = k \} \sum_{j=2k-n+i}^k \Pr\{ F_{k} = j \;|\; K = k \} \\
    & \qquad\qquad  = \frac{1}{C(p)}
            \sum_{k = \lfloor \frac{n}{2} \rfloor + 1}^{n-1} \sum_{i=0}^{n - k - 1} \sum_{j = 2k - n + i}^{ k } {n \choose k} { n-k \choose i} { k \choose j} \bar{p}^k p^{n-k} q_{T}^i \bar{q}_T^{n - k - i} q_{F}^j \bar{q}_F^{k - j}
\end{align*}
where the second line follows from the independence of TP and FP detections and the third line follows from the definitions of each of the probability mass functions. The last line is equal to $S(p, q_T, q_F) / C(p)$.

The second event is characterized as
\begin{align*}
    \sum_{k = \lfloor \frac{n}{2} \rfloor + 1}^{n} \Pr\{ K = k \} \Pr \{ F_{k} = k \text{~and~} T_{n-k} = n - k \;|\; K = k \} 
    = \frac{1}{C(p)}
            \sum_{k = \lfloor \frac{n}{2} \rfloor + 1}^{n-1} {n \choose k} \bar{p}^k p^{n-k} q_{T}^{n-k} q_{F}^{k}
\end{align*}
where the equality follows from the independence of $F_k$ and $T_{n-k}$ and their definitions as Binomial distributions. This is equal to $T(p, q_T, q_F) / C(p)$. 

The total $P_{err}$ is the sum of these two events, which completes the proof.
\end{proof}

Before proving Theorem \ref{thm:no_free_lunch}, we first require the following helper Lemma.

\begin{lemma}\label{lem:helper_lemma}
    Suppose Assumptions \ref{ass:theory_assumptions} and \ref{ass:classifier} hold. Let 
    \begin{align}\label{eq:def_beta_k}
        \beta_k &:= \Pr \{ F_{k} \geq 2k - n + T_{n-k} \text{ and } T_{n-k} \leq n - k - 1\} \\
        \gamma_k &:= \Pr \{ F_{k} = k \text{ and } T_{n-k} = n-k \} \label{eq:def_gamma_k}
    \end{align}
    Then, both $\beta_k$ and $\gamma_k$ are non-increasing in $k$.
\end{lemma}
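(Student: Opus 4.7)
The plan is to dispatch the two claims separately. Since $T_{n-k}$ and $F_k$ are independent, we immediately obtain $\gamma_k = q_F^k\, q_T^{n-k}$, and by Assumption \ref{ass:classifier} the ratio $\gamma_{k+1}/\gamma_k = q_F/q_T \leq 1$, so $\gamma_k$ is non-increasing. The substantive content is the claim for $\beta_k$.

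For $\beta_k$, I would build a coupling that changes the role of exactly one worker. Partition the $n$ repeats into $k$ fixed majority workers, $n-k-1$ fixed minority workers, and one distinguished worker. In the ``$k$-scenario'' this distinguished worker is a minority reporter, so the initial majority count is $k$; in the ``$k+1$-scenario'' the distinguished worker is instead in the majority, so the initial majority count is $k+1$. Let $F^*$ denote the false-positive count among the $k$ fixed majority workers, $T^*$ the true-positive count among the $n-k-1$ fixed minority workers, $B \sim \mathrm{Bernoulli}(q_F)$ the detection flag of the distinguished worker under the $k+1$-scenario, and $B' \sim \mathrm{Bernoulli}(q_T)$ the detection flag under the $k$-scenario, with $F^*, T^*, B, B'$ mutually independent. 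Writing $D^* := F^* - T^*$, conditioning on $B'$ gives $\beta_k = q_T A + (1-q_T) C$, while conditioning on $B$ gives $\beta_{k+1} = q_F A + (1-q_F) E$, where
\begin{align*}
    A & := \Pr\{D^* \geq 2k+1-n,\; T^* \leq n-k-2\}, \\
    C & := \Pr\{D^* \geq 2k-n,\; T^* \leq n-k-1\}, \\
    E & := \Pr\{D^* \geq 2k+2-n,\; T^* \leq n-k-2\}.
\end{align*}

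The finishing step observes that the event defining $C$ contains the event defining $A$, which in turn contains the event defining $E$, so $C \geq A \geq E$. Therefore $\beta_k - A = (1-q_T)(C-A) \geq 0$ and $\beta_{k+1} - A = (1-q_F)(E-A) \leq 0$, which sandwiches $A$ between $\beta_{k+1}$ and $\beta_k$ and yields $\beta_k \geq \beta_{k+1}$.

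The main obstacle is finding this coupling. A brute-force algebraic comparison using the triple-sum expression for $\beta_k$ from the proof of Theorem \ref{thm:label_quality} is unwieldy because shifting $k \to k+1$ simultaneously changes the number of Binomial trials on both the $q_T$-side and the $q_F$-side and shifts both constraint boundaries. Localising the change to a single worker whose contribution is $\mathrm{Bernoulli}(q_T)$ (if minority) or $\mathrm{Bernoulli}(q_F)$ (if majority) reduces the monotonicity question to the elementary fact that $q_T \geq q_F$ places more weight on the larger event $C$ appearing in $\beta_k$ than on the smaller event $E$ appearing in $\beta_{k+1}$, which is precisely what the sandwich exploits.
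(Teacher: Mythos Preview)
Your proof is correct and uses essentially the same coupling as the paper: isolate one ``distinguished'' worker whose role flips between majority and minority, so that $F_{k+1}=F^*+B$ and $T_{n-k}=T^*+B'$ with $B\sim\mathrm{Bernoulli}(q_F)$, $B'\sim\mathrm{Bernoulli}(q_T)$. The only difference is in execution: rather than your conditioning-and-sandwich with the intermediate quantity $A$, the paper simply observes that on the joint space the $\beta_{k+1}$-event $\{F^* \geq 2k-n+T^* + (2-B-B'),\; T^* \leq n-k-2 + B'\}$ is pointwise contained in the $\beta_k$-event $\{F^* \geq 2k-n+T^*+B',\; T^*+B' \leq n-k-1\}$ because $2-B-B'\geq B'$ wait, more directly $2-B-B'\geq 0$ and $B'-1\leq 0$, giving $\beta_{k+1}\leq\beta_k$ in one line without introducing $A,C,E$.
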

\begin{proof}{Proof of Lemma \ref{lem:helper_lemma}}
    We first prove for $\beta_k$. 
    Since $F_{k}$ and $T_{n-k}$ are independent Binomial distributions, we can define $F_{k+1} = F_k + Y$ and $T_{n-k-1} = T_{n-k} - Z$ where $Y \sim \Bernoulli(q_F)$ and $Z \sim \Bernoulli(q_T)$ are independent Bernoulli distributions. Then, we want to prove the inequality:
    \begin{align*}
        & \Pr \{ F_{k} \geq 2k - n + T_{n-k} \text{ and } T_{n-k} \leq n - k - 1 \} \\
        & \qquad \qquad \geq \Pr \{ F_{k+1} \geq 2(k+1) - n + T_{n-k-1} \text{ and } T_{n-k-1} \leq n - k - 2\} \\
        & \qquad \qquad = \Pr \{ F_k \geq 2k - n + T_{n-k} + (2 - Y - Z) \text{ and } T_{n-k} \leq n - k - 1 + (Z - 1) \} 
    \end{align*}
    where the equality follows from the definition of $F_{k+1}$ and $T_{n-k-1}$.

    To prove the inequality, note that both $2 - Y - Z \geq 0$ and $Z - 1 \leq 0$ with probability 1. Therefore, the event space of both events in the right-hand-side must be a subset of the events in the left-hand-side.

    We now prove for $\gamma_k$.
    Consider the ratio
    \begin{align}
        \frac{\gamma_{k+1}}{\gamma_k} 
        = \frac{ {k+1 \choose k+1} {n-k-1 \choose n-k-1} q_T^{k+1} \bar{q}_T^0 q_F^{n-k-1} \bar{q}_F^0 }{ {k \choose k} {n-k \choose n-k} q_T^{k} \bar{q}_T^0 q_F^{n-k} \bar{q}_F^0} = \frac{q_F}{q_T} < 1.
    \end{align}
    Above, the first equality follows from the definition of the probabilities, and the second equality follows from reducing the terms. The inequality follows from Assumption \ref{ass:classifier}, which states $q_F < q_T$. This completes the proof.
\end{proof}

We are now ready to prove Theorem \ref{thm:no_free_lunch}.

\begin{proof}{Proof of Theorem \ref{thm:no_free_lunch}}
    To prove that the label quality decreases with $p$, it suffices to show that $P_{err}$ will increase monotonically with $p$. For notational simplicity, we will rewrite $P_{err}$ below. First, let $\hat{k} := \lfloor \frac{n}{2} \rfloor + 1$ and let $g_k(p) := {n \choose k} (1-p)^k p^{n-k}$. Then,
    \begin{align*}
        S(p) := \sum_{k = \hat{k}}^{n-1} g_k(p) \beta_k \qquad T(p) := \sum_{k = \hat{k}}^{n} g_k(p) \gamma_k,
    \end{align*}
    where $\beta_k$ is defined as in \eqref{eq:def_beta_k} and $\gamma_k$ is defined as in \eqref{eq:def_gamma_k}. Note that we will omit the dependency on $p, q_T, q_F$ for all functions when it is obvious.

    We will prove that each of the two terms of $P_{err} = S(p) / C(p) + T(p) / C(p)$ both increase monotonically with $p$ by showing that their derivatives with respect to $p$ are non-negative.

    We first consider $S(p) / C(p)$. Note that    
    \begin{align*}
        \frac{d}{dp}\left( \frac{S}{C} \right) = \frac{S' C - S C'}{C^2}.
    \end{align*}
    Since the denominator is positive for $p \in (0, 1)$, we must only prove that the numerator is positive. We first use the following log trick to compute the derivative of $g_k(p)$ below:
    \begin{align*}
        \frac{d \log g_k(p)}{d p} = \frac{d \log g_k(p)}{ d g_k(p)} \frac{d g_k(p)}{d p} = \frac{n - k}{p} - \frac{k}{1-p} ~~\Longleftrightarrow~~ g'_k = g_k \left( \frac{n - k}{p} - \frac{k}{1-p} \right).
    \end{align*}
    Consequently,
    \begin{align*}
        S' = \sum_{k= \hat{k}}^{n-1} g_k \left( \frac{n - k}{p} - \frac{k}{1-p} \right) \beta_k \qquad C' = \sum_{m = \hat{k}}^n g_m \left( \frac{n - m}{p} - \frac{m}{1-p} \right),
    \end{align*}
    where we use subscript $m$ for $C(p)$ to avoid overloading the subscript for $S(p)$.

    Finally,
    \begin{align}
        S' C - S C' =&  \left( \sum_{k= \hat{k}}^{n-1} g_k \left( \frac{n - k}{p} - \frac{k}{1-p} \right) \beta_k \right) \left( \sum_{m = \hat{k}}^n g_m \right) - \left( \sum_{k= \hat{k}}^{n-1} g_k \beta_k \right) \left( \sum_{m = \hat{k}}^n g_m \left( \frac{n - m}{p} - \frac{m}{1-p} \right) \right) \label{eq:no_free_lunch_proof1} \\
        = & \sum_{k= \hat{k}}^{n-1} \sum_{m = \hat{k}}^n g_k g_m \beta_k \left( \frac{n - k}{p} - \frac{k}{1-p} \right)  - \sum_{k=\hat{k}}^{n-1} \sum_{m=\hat{k}}^n g_k g_m \beta_k \left( \frac{n - m}{p} - \frac{m}{1-p} \right)  \label{eq:no_free_lunch_proof2} \\
        = & \sum_{k= \hat{k}}^{n-1} \sum_{m = \hat{k}}^n g_k g_m \beta_k \left( \frac{n - k}{p} - \frac{k}{1-p} - \frac{n - m}{p} + \frac{m}{1-p}  \right)   \label{eq:no_free_lunch_proof3} \\
        = & \frac{1}{p ( 1- p)} \sum_{k= \hat{k}}^{n-1} \sum_{m = \hat{k}}^n g_k g_m \beta_k \left( m - k \right)  \label{eq:no_free_lunch_proof4}
    \end{align}
    Above, \eqref{eq:no_free_lunch_proof2} follows from expanding the sum products, \eqref{eq:no_free_lunch_proof3} follows from grouping the two summations together, and \eqref{eq:no_free_lunch_proof4} follows from factoring the denominators.

    Note that the summands will be positive for all $m > k$, negative for $m < k$, and zero otherwise. We use this to decompose the summation and apply a symmetry argument below:
    \begin{align}
        \text{RHS~}\eqref{eq:no_free_lunch_proof4} = & \frac{1}{p ( 1- p)} \left( \sum_{k= \hat{k}}^{n-1} \sum_{m = \hat{k}}^{k-1} g_k g_m \beta_k \left( m - k \right) + \sum_{k= \hat{k}}^{n-1} \sum_{m = k+1}^{n} g_k g_m \beta_k \left( m - k \right)\right) 
        \label{eq:no_free_lunch_proof5} \\
        = & \frac{1}{p ( 1- p)} \left( \sum_{m= \hat{k}}^{n} \sum_{k = m + 1}^{n - 1} g_k g_m \beta_k \left( m - k \right) + \sum_{k= \hat{k}}^{n-1} \sum_{m = k+1}^{n} g_k g_m \beta_k \left( m - k \right)\right) 
        \label{eq:no_free_lunch_proof6} \\
        = & \frac{1}{p ( 1- p)} \left( \sum_{k= \hat{k}}^{n} \sum_{m = k + 1}^{n - 1} g_m g_k \beta_m \left( k - m \right) + \sum_{k= \hat{k}}^{n-1} \sum_{m = k+1}^{n} g_k g_m \beta_k \left( m - k \right)\right) 
        \label{eq:no_free_lunch_proof7} \\
        = & \frac{1}{p ( 1- p)} \left( \sum_{k= \hat{k}}^{n-1} \sum_{m = k+1}^{n} g_k g_m \beta_k \left( m - k \right) - \sum_{k= \hat{k}}^{n} \sum_{m = k + 1}^{n - 1} g_m g_k \beta_m \left( m - k \right) \right) 
        \label{eq:no_free_lunch_proof8} \\
        = & \frac{1}{p ( 1- p)} \left( \sum_{k= \hat{k}}^{n-1} \sum_{m = k+1}^{n-1} g_k g_m (\beta_k - \beta_m) \left( m - k \right) +  \sum_{k= \hat{k}}^{n-1} g_n g_k \beta_k \left( n - k \right) \right)
        \label{eq:no_free_lunch_proof9}
    \end{align}
    Above, \eqref{eq:no_free_lunch_proof5} decomposes the sum into two components over $m < k$ and $m > k$, respectively. Then, \eqref{eq:no_free_lunch_proof6} reorders the sums inside the first component and \eqref{eq:no_free_lunch_proof7} applies a change of variables $m \rightarrow k, k \rightarrow m$. Then, \eqref{eq:no_free_lunch_proof8} reorders the two components.. Finally, \eqref{eq:no_free_lunch_proof9} combines the two components into one large sum and a remainder component.

    Finally, note that in RHS \eqref{eq:no_free_lunch_proof9}, each of the summands inside both components are positive, since $g_k, g_m > 0$ always, $\beta_k \geq \beta_m > 0$ for all $ m > k$ due to Lemma \ref{lem:helper_lemma}, and $m - k > 0, n - k > 0$ inside their sums. Consequently, $S'C - SC' \geq 0$, meaning that $S(p) / C(p)$ has a non-negative derivative and increases monotonically with $p$.

    We now prove for $T(p) / C(p)$ using a similar argument. Note that    
    \begin{align*}
        \frac{d}{dp}\left( \frac{T}{C} \right) = \frac{T' C - T C'}{C^2},
    \end{align*}
    meaning we only need to show that $T' C - T C' > 0$, where $T' = \sum_{k=\hat{k}}^n g_k ((n-k)/p - k/(1-p)) \gamma_k$ can be determined with the same steps as before for $S'$ and $C'$. Next,
    \begin{align}
        T'C - TC' = &  \left( \sum_{k= \hat{k}}^{n} g_k \left( \frac{n - k}{p} - \frac{k}{1-p} \right) \gamma_k \right) \left( \sum_{m = \hat{k}}^n g_m \right) - \left( \sum_{k= \hat{k}}^{n} g_k \gamma_k \right) \left( \sum_{m = \hat{k}}^n g_m \left( \frac{n - m}{p} - \frac{m}{1-p} \right) \right) \label{eq:no_free_lunch_proof10} \\
        = &  \frac{1}{p(1-p)} \sum_{k=\hat{k}}^n \sum_{m=k+1}^{n} g_k g_m (\gamma_k - \gamma_m) (m - k)  \label{eq:no_free_lunch_proof11}.
    \end{align}
    We arrive at \eqref{eq:no_free_lunch_proof11} by following the same steps in \eqref{eq:no_free_lunch_proof2}--\eqref{eq:no_free_lunch_proof9}. Next, note that $\gamma_k - \gamma_m > 0$ from Lemma \ref{lem:helper_lemma} and that $m - k > 0$. Therefore, $T'C - TC' > 0$, meaning that $T(p)/C(p)$ has a non-negative derivative and increases monotonically with $p$. 
    This completes the proof.
\end{proof}

\begin{proof}{Proof of Corollary \ref{cor:no_free_lunch_prune_rate}}
    First, note that $r(p)$ is invertible and differentiable in $p$. Let $\rho(r) = p$ denote the inverse function of $r(p)$ and consider $(S(p) + T(p))/ C(p)) = (S(\rho(r) + T(\rho(r)) / C(\rho(r))$. We prove below that the derivative of this function is positive for all $r$. First, by the chain rule:
    \begin{align}\label{eq:proof_no_free_lunch_prune_rate}
        \frac{d}{dr} \left( \frac{S(\rho(r)) T(\rho(r))}{C(\rho(r))} \right) &= \frac{d}{d\rho}\left(\frac{S(\rho) + T(\rho(r))}{C(\rho)} \right) \frac{d\rho}{dr}.
    \end{align}
    From Theorem \ref{thm:no_free_lunch}, the first term in the product is positive for all $\rho$. Furthermore, from the inverse function theorem,
    \begin{align}
        \frac{d\rho}{dr} = \frac{1}{\frac{d}{dp}r(\rho(r))} > 0
    \end{align}
    where the inequality comes from the observation that $dr/dp > 0$ for all $p$. Therefore, the product in \eqref{eq:proof_no_free_lunch_prune_rate} is the product of a positive and a positive number, meaning that it is also positive.
\end{proof}


\end{document}